\newtheorem{theorem}{Theorem}[section]
\newtheorem{lemma}[theorem]{Lemma}
\newtheorem{claim}[theorem]{Claim}
\theoremstyle{definition}
\newtheorem{definition}{Definition}[section]
\newcommand{\INs}{\mathcal{X}}
\newcommand{\OUTs}{\mathcal{Y}}
\newcommand{\R}{\mathbb{R}}
\newcommand{\w}{\mathbf{w}}
\newcommand{\Expect}{\mathbb{E}}
\newcommand{\Prob}{\mathbb{P}}
\newcommand{\obs}{\mathbf{x}}
\newcommand{\Dist}{\mathcal{D}}
\newcommand{\Hypo}{\mathcal{H}}
\newcommand{\LblTrn}{\mathbf{S}_\ell}
\newcommand{\UnLblTrn}{\mathbf{X}_u}
\renewcommand{\S}{\mathbf{S}}
\newdimen\AAdi%
\newbox\AAbo%
\def\AAk#1#2{\setbox\AAbo=\hbox{#2}\AAdi=\wd\AAbo\kern#1\AAdi{}}%
\def\AAr#1#2#3{\setbox\AAbo=\hbox{#2}\AAdi=\ht\AAbo\raise#1\AAdi\hbox{#3}}%
\newcommand*{\rom}[1]{\expandafter\@slowromancap\romannumeral #1@}
\newcommand{\argmin}{\mathop{\mathrm{argmin}}\limits}
\newcommand*\rel@kern[1]{\kern#1\dimexpr\macc@kerna}
\newcommand*\widebar[1]{%
  \begingroup
  \def\mathaccent##1##2{%
    \rel@kern{0.8}%
    \overline{\rel@kern{-0.8}\macc@nucleus\rel@kern{0.2}}%
    \rel@kern{-0.2}%
  }%
  \macc@depth\@ne
  \let\math@bgroup\@empty \let\math@egroup\macc@set@skewchar
  \mathsurround\z@ \frozen@everymath{\mathgroup\macc@group\relax}%
  \macc@set@skewchar\relax
  \let\mathaccentV\macc@nested@a
  \macc@nested@a\relax111{#1}%
  \endgroup
}
\title{Self-Training of Halfspaces with Generalization Guarantees under Massart Mislabeling Noise Model}
\author{Lies Hadjadj$^\dagger$, Massih-Reza Amini$^\dagger$, Sana Louhichi$^\ddagger$, Alexis Deschamps$^\star$\\ 
\{Firstname.Lastname\}@univ-grenoble-alpes.fr\\
$^\dagger$ Computer Science Laboratory (LIG)\\
$^\ddagger$ Department of Statistics (LJK)\\
$^\star$ Departement of Physics (SIMaP) \\
University of Grenoble Alpes, France}
\date{}
\begin{document}
\maketitle

\begin{abstract}
We investigate the generalization properties of a self-training algorithm with halfspaces. The approach learns a list of halfspaces iteratively from labeled and unlabeled training data, in which each iteration consists of two steps: exploration and pruning. In the exploration phase, the halfspace is found sequentially by maximizing the unsigned-margin among  unlabeled examples and then assigning pseudo-labels to those that have a distance higher than the current threshold. The pseudo-labeled examples are then added to the training set, and a new classifier is learned. This process is repeated until no more unlabeled examples remain for pseudo-labeling. In the pruning phase, pseudo-labeled samples that have a distance to the last halfspace greater than the associated  unsigned-margin are then discarded. We prove that the misclassification error of the resulting sequence of classifiers is bounded and show that the resulting semi-supervised approach never degrades performance compared to the  classifier learned using only the initial labeled training set. Experiments carried out on a variety of benchmarks demonstrate the efficiency of the proposed approach compared to state-of-the-art methods.
\end{abstract}

\section{Introduction}

In recent years, several attempts have been made to establish a theoretical foundation for semi-supervised learning. These studies are mainly interested in the generalization ability of semi-supervised learning techniques \cite{Rigollet07,MaximovAH18} and the utility of unlabeled data in the training process \cite{CastelliC95,Singh09,ICML-2011-LiZ}. The majority of these works are based on the concept called \textit{compatibility} in \cite{Bal06}, and try to exploit the connection between the marginal data distribution and the target function to be learned. The common conclusion of these studies is that unlabeled data will only be useful for training if such a relationship exists. 

The three key types of relations considered in the literature are cluster assumption, manifold assumption, and low-density separation \cite{Zhu05,Chap06}. The cluster assumption states that data contains homogeneous labeled clusters, and unlabeled training examples allow to recognize these clusters. In this case, the marginal distribution is viewed as a mixture of class conditional distributions, and semi-supervised learning has been shown to be superior to supervised learning in terms of achieving smaller finite-sample error bounds in some general cases, and in some others, it provides a faster rate of error convergence \cite{CastelliC95,Rigollet07,MaximovAH18, Singh09}. In this line, \cite{Ben-DavidLP08} showed that the access to the marginal distribution over unlabeled training data would not provide sample size guarantees better than those obtained by supervised learning unless one assumes very strong assumptions about the conditional distribution over the class labels.  Manifold assumption stipulates that the target function is in a low-dimensional manifold. \cite{Niyogi13a} establishes a context through which such algorithms can be analyzed and potentially justified; the main result of this study is that unlabeled data may help the learning task in certain cases by defining the manifold. Finally, low-density separation states that the decision boundary lies in low-density regions. A principal way, in this case, is to employ a margin maximization strategy which results in pushing away the decision boundary from the unlabeled data \cite[ch. 6]{Chap06}. Semi-supervised approaches based on this paradigm mainly assign pseudo-labels to high-confident unlabeled training examples with respect to the predictions and include these pseudo-labeled samples in the learning process. However, \cite{chaw11} investigated empirically the problem of label noise bias introduced during the pseudo labeling process in this case and showed that the use of unlabeled examples could have a minimal gain or even degrade performance, depending on the generalization ability of the initial classifier trained over the labeled training data.   

In this paper, we study the generalization ability of a self-training algorithm with halfspaces that operates in two steps. In the first step, halfspaces are found iteratively over the set of labeled and unlabeled training data by maximizing the unsigned-margin of unlabeled examples and then assigning pseudo-labels to those with a distance greater than a found threshold.  The pseudo-labeled unlabeled examples are then added to the training set, and a new classifier is learned. This process is repeated until there are no more unlabeled examples to pseudo-label. In the second step, pseudo-labeled examples with an unsigned-margin greater than the last found threshold are removed from the training set. 

Our contribution is twofold: $(a)$ we present a first generalization bound for self-training with halfspaces in the case where class labels of examples are supposed to be corrupted by a Massart noise model; $(b)$ We show that the use of unlabeled data in the proposed self-training algorithm does not degrade the performance of the first halfspace trained over the labeled training data.

In the remainder of the paper, Section \ref{sec2} presents the definitions and the learning objective. In Section \ref{sec3}, we present in detail the adaptation of the self-training algorithm for halfspaces. Section \ref{sec4} presents a bound over the misclassification error of the classifier outputted by the proposed algorithm and demonstrates that this misclassification error is upper-bounded by the misclassification error of the fully supervised halfspace.  In Section \ref{sec5}, we present experimental results, and we conclude this work in Section \ref{sec6}. 

\section{Framework and Notations}
\label{sec2} 
 We consider binary classification problems where the input space $\INs$ is a subset of $\R^d$, and the output space is $\OUTs=\{-1,+1\}$. We study learning algorithms that operate in hypothesis space $\Hypo_d=\{h_\w: \INs \rightarrow \OUTs\}$ of centered halfspaces, where each $h_\w\in \Hypo_d$ is a Boolean function of the form $h_\w(\obs) = \text{sign}(\langle \w,\obs \rangle)$, with $\w \in~\R^d$ such that $\|\w\|_2 \leq 1$. 

 Our analysis succeeds the recent theoretical advances in robust supervised learning of polynomial algorithms for training halfspaces under large margin assumption \cite{Diak19, Mon20, Diak20}, where the label distribution has been corrupted with the Massart noise model \cite{mas06}. These studies derive a PAC bound for generalization error for supervised classifiers that depends on the corruption rate of the labeled training set and shed light on a new perspective for analyzing the self-training algorithm. Similarly, in our analysis, we suppose that self-training can be seen as learning with an imperfect expert. Whereat at each iteration, labels of the pseudo-labeled set have been corrupted with a Massart noise \cite{mas06} oracle defined as:
 
 \begin{definition}[\cite{mas06} noise oracle] \label{def}
Let $\mathcal{C}=\{f:\INs\rightarrow\OUTs\}$ be a class of Boolean functions over $\INs \subseteq \R^d$, with $f$  an unknown target function in $\mathcal{C}$, and $0 \leq \boldsymbol{\eta} < 1/2$. Let $\eta$ be an unknown parameter function such that $\Expect_{\obs \sim \Dist_\obs}[\eta(\obs)]  \leq \boldsymbol{\eta}$, with $\Dist_\obs$  any marginal distribution over $\INs$. The corruption oracle $\mathcal{O}(f, \Dist_\obs, \eta)$ works as follow: each time $\mathcal{O}(f, \Dist_\obs, \eta)$ is invoked, it returns a pair $(\obs, y)$ where $\obs$ is generated i.i.d. from $\Dist_\obs$; $y = -f(\obs)$ with probability $\eta(\obs)$ and $y=f(\obs)$ with probability $1-\eta(\obs)$.
\end{definition}

Let $\Dist$ denote the joint distribution over $\INs \times \OUTs$ generated by the above oracle with an unknown parameter function $\eta$ such that $\Expect_{\obs \sim \Dist_\obs}[\eta(\obs)]  \leq \boldsymbol{\eta}$. We suppose that the training set is composed of $\ell$ labeled samples $\LblTrn~=~(\obs_i,y_i)_{1\leq i\leq \ell}\in(\INs \times \OUTs)^\ell$ and $u$ unlabeled samples $\UnLblTrn~=~(\obs_i)_{\ell+1\leq i \leq \ell+u}\in~\INs^u$, where $\ell <\!\!< u$. Furthermore, we suppose that each pair $(\obs,y)\in\INs\times\OUTs$ is i.i.d. with respect to the probability distribution $\Dist$, we denote by $\Dist_\obs$ the marginal of $\Dist$ on $\obs$, and $\Dist_y(\obs)$ the distribution of $y$ conditional on $\obs$. Finally, for any  integer $d$, let  $[d] = \{0,...,d\}$.

\subsection{Learning objective}

Given $\LblTrn$ and $\UnLblTrn$, our goal is to find a learning algorithm that outputs a hypothesis $h_\w \in \Hypo_d$ such that with high probability, the misclassification error $\Prob_{(\obs, y) \sim \Dist}{[h_\w(\obs) \neq y]}$ is minimized and to show with high probability that the performance of such algorithm is better or equal to any hypothesis in $\mathcal{H}_d$ obtained from $\LblTrn$ only. Here we denote by $\mathbf{\eta}_\w(\obs) = \Prob_{y \sim \Dist_y(\obs)}{[h_\w(\obs) \neq y]}$ the conditional misclassification error of a hypothesis $h_{\w}\in\Hypo_d$ with respect to $\Dist$ and $\w^*$ the normal vector of $h_{\w^*}\in~\Hypo_d$ that achieves the optimal misclassification error; $\pmb{\eta}^* =~\underset{\w, \|\w\|_2 \leq 1}{\min} \; \Prob_{(\obs, y) \sim \Dist}{[h_\w(\obs) \neq y]}$.

By considering the indicator function $\mathbbm{1}_\pi$ defined as $\mathbbm{1}_\pi=1$ if the predicate $\pi$ is true and $0$ otherwise; we prove in the following lemma that the probability of misclassification of halfspaces over examples with an unsigned-margin greater than a threshold $\gamma>0$ is bounded by the same quantity $1>\pmb{\eta}>0$ that upper-bounds the misclassification error of these examples.

\begin{lemma}\label{lem1}
For all $h_\w \in \Hypo_d\,$, if there exist $\pmb{\eta}\in ]0,1[$ and $\gamma > 0$ such that:\\$\Prob_{\obs \sim \Dist_\obs}{[|\langle\w, \obs\rangle| \geq \gamma]}> 0$ and that $\Expect_{\obs \sim \Dist_\obs}{[(\eta_\w(\obs) - \pmb{\eta})\mathbbm{1}_{|\langle\w, \obs\rangle| \geq \gamma}] \leq 0}$, then:\\$\Prob_{(\obs, y) \sim \Dist}{[h_\w(\obs) \neq y \big| |\langle\w, \obs\rangle| \geq \gamma]} \leq \pmb{\eta}$. 
\end{lemma}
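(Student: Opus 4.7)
The plan is to reduce the conditional-probability statement to an expectation identity so that the hypothesis on $\Expect_{\obs \sim \Dist_\obs}[(\eta_\w(\obs) - \pmb{\eta})\mathbbm{1}_{|\langle\w, \obs\rangle| \geq \gamma}]$ can be invoked directly. First I would write the conditional probability as a ratio,
\[
\Prob_{(\obs, y) \sim \Dist}\!\bigl[h_\w(\obs) \neq y \,\big|\, |\langle\w, \obs\rangle| \geq \gamma\bigr]
=\frac{\Prob_{(\obs, y) \sim \Dist}\!\bigl[h_\w(\obs) \neq y,\, |\langle\w, \obs\rangle| \geq \gamma\bigr]}{\Prob_{\obs \sim \Dist_\obs}\!\bigl[|\langle\w, \obs\rangle| \geq \gamma\bigr]},
\]
which is well-defined since the denominator is strictly positive by assumption.

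Next I would condition on $\obs$ inside the numerator. By the tower property and the definition $\eta_\w(\obs) = \Prob_{y\sim \Dist_y(\obs)}[h_\w(\obs)\neq y]$, the numerator equals
\[
\Expect_{\obs \sim \Dist_\obs}\!\bigl[\eta_\w(\obs)\,\mathbbm{1}_{|\langle\w, \obs\rangle| \geq \gamma}\bigr].
\]
This is the crucial rewriting, because it exactly matches the quantity that appears in the second hypothesis of the lemma.

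Now I would add and subtract $\pmb{\eta}$ and use linearity of expectation to split this numerator as
\[
\Expect_{\obs \sim \Dist_\obs}\!\bigl[(\eta_\w(\obs)-\pmb{\eta})\,\mathbbm{1}_{|\langle\w, \obs\rangle| \geq \gamma}\bigr]
+\pmb{\eta}\,\Prob_{\obs \sim \Dist_\obs}\!\bigl[|\langle\w, \obs\rangle| \geq \gamma\bigr].
\]
By the hypothesis, the first term is $\leq 0$, so the numerator is at most $\pmb{\eta}\,\Prob_{\obs \sim \Dist_\obs}[|\langle\w, \obs\rangle| \geq \gamma]$. Dividing by the strictly positive denominator yields the desired bound $\pmb{\eta}$.

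There is no real obstacle here: the statement is essentially a bookkeeping fact about conditional probabilities, and the only care needed is (i) to justify the denominator being nonzero, which is given, and (ii) to correctly identify the conditional misclassification probability inside the expectation as $\eta_\w(\obs)$, which is immediate from its definition. Thus the whole argument is three lines once the ratio form is written down.
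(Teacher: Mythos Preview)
Your proposal is correct and follows essentially the same route as the paper: both arguments rest on the identity
\[
\Prob_{(\obs, y) \sim \Dist}{[h_\w(\obs) \neq y \mid |\langle\w, \obs\rangle| \geq \gamma]}
=\frac{\Expect_{\obs \sim \Dist_\obs}{[\eta_\w(\obs)\mathbbm{1}_{|\langle\w, \obs\rangle| \geq \gamma}]}}{\Prob_{\obs \sim \Dist_\obs}{[|\langle\w, \obs\rangle| \geq \gamma]}}
\]
and then use linearity of expectation to peel off $\pmb{\eta}\,\Prob_{\obs\sim\Dist_\obs}[|\langle\w,\obs\rangle|\geq\gamma]$ so that the hypothesis applies directly. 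The only cosmetic difference is the direction of the writeup: the paper starts from the hypothesis and derives the ratio bound, whereas you start from the ratio and reduce to the hypothesis.
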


\begin{proof}
For all hypotheses $h_\w$ in $\Hypo_d$, we know that the error achieved by $h_\w$ in the region of margin $\gamma$ from $\w$ satisfies $\Expect_{\obs \sim \Dist_\obs}{[(\eta_\w(\obs) - \pmb{\eta})\mathbbm{1}_{|\langle\w, \obs\rangle| \geq \gamma}] \leq 0}$; by rewriting the expectation, we obtain the following $    \Expect_{\obs \sim \Dist_\obs}{[\eta_\w(\obs)\mathbbm{1}_{|\langle\w, \obs\rangle| \geq \gamma}] - \pmb{\eta}\Prob_{\obs \sim \Dist_\obs}{[|\langle\w, \obs\rangle| \geq \gamma]} \leq 0}$. We have then  $    \frac{\Expect_{\obs \sim \Dist_\obs}{[\eta_\w(\obs)\mathbbm{1}_{|\langle\w, \obs\rangle| \geq \gamma}]}} {\Prob_{\obs \sim \Dist_\obs}{[|\langle\w, \obs\rangle| \geq \gamma]}} \leq \pmb{\eta}$ and the result follows from the equality:
\begin{equation*}
\Prob_{(\obs, y) \sim \Dist}{[h_\w(\obs) \neq y \big| |\langle\w, \obs\rangle| \geq \gamma]}=\frac{\Expect_{\obs \sim \Dist_\obs}{[\eta_\w(\obs)\mathbbm{1}_{|\langle\w, \obs\rangle| \geq \gamma}]}} {\Prob_{\obs \sim \Dist_\obs}{[|\langle\w, \obs\rangle| \geq \gamma]}}.
\end{equation*}
\end{proof}

Suppose that there exists a pair $(\widetilde{\w}, \widetilde \gamma)$ minimizing:
\begin{equation}\label{op}
(\widetilde\w, \widetilde \gamma)\in \underset{\w\in \R^d,\gamma\geq 0}{\argmin}\frac{\Expect_{\obs \sim \Dist_\obs}{[\eta_\w(\obs)\mathbbm{1}_{|\langle\w, \obs\rangle| \geq \gamma}]}} {\Prob_{\obs \sim \Dist_\obs}{[|\langle\w, \obs\rangle| \geq \gamma]}}.
\end{equation} By defining $\widetilde \eta$ as:
\begin{equation*}
{\widetilde \eta} 
 = \underset{\w\in \R^d,\gamma\geq 0}{\inf}\frac{\Expect_{\obs \sim \Dist_\obs}{[\eta_\w(\obs)\mathbbm{1}_{|\langle\w, \obs\rangle| \geq \gamma}]}} {\Prob_{\obs \sim \Dist_\obs}{[|\langle\w, \obs\rangle| \geq \gamma]}}.
\end{equation*}
The following inequality holds:
\[
{\widetilde \eta} \leq \underset{\w\in \R^d}{\inf} \frac{\Expect_{\obs \sim \Dist_\obs}{[\eta_\w(\obs)\mathbbm{1}_{|\langle\w, \obs\rangle| \geq 0}]}} {\Prob_{\obs \sim \Dist_\obs}{[|\langle\w, \obs\rangle| \geq 0]}}= \pmb{\eta}^*.
\]
This inequality paves the way for the following claim, which is central to the self-training strategy described in the next section.
\begin{claim} \label{claim_main} Suppose that there exists  a pair 
$(\widetilde\w, \widetilde\gamma)$ satisfying the minimization problem (\ref{op}) with 
$\;\Prob_{\obs \sim \Dist_\obs}{[|\langle\widetilde\w, \obs\rangle| \geq \widetilde\gamma]} > 0\;$, then  $\;\Prob_{(\obs, y) \sim \Dist}{[h_{\widetilde\w}(\obs) \neq y \big| |\langle \widetilde\w, \obs\rangle| \geq \widetilde \gamma]} \leq \pmb{\eta}^*$.
\end{claim}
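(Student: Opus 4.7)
The plan is to chain together two facts already established in the excerpt: the identity at the end of the proof of Lemma \ref{lem1}, and the displayed inequality $\widetilde\eta \leq \pmb{\eta}^*$ that immediately precedes the claim. The positivity hypothesis $\Prob_{\obs \sim \Dist_\obs}{[|\langle\widetilde\w, \obs\rangle| \geq \widetilde\gamma]} > 0$ guarantees that the conditional probability on the left-hand side is well-defined, so the remainder reduces to a short substitution.

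Specifically, I would apply the final identity in the proof of Lemma \ref{lem1} with $(\w,\gamma)=(\widetilde\w,\widetilde\gamma)$ to obtain
\[
\Prob_{(\obs, y) \sim \Dist}{[h_{\widetilde\w}(\obs) \neq y \big| |\langle \widetilde\w, \obs\rangle| \geq \widetilde \gamma]} = \frac{\Expect_{\obs \sim \Dist_\obs}{[\eta_{\widetilde\w}(\obs)\mathbbm{1}_{|\langle\widetilde\w, \obs\rangle| \geq \widetilde\gamma}]}} {\Prob_{\obs \sim \Dist_\obs}{[|\langle\widetilde\w, \obs\rangle| \geq \widetilde\gamma]}}.
\]
Because $(\widetilde\w,\widetilde\gamma)$ is by hypothesis a minimizer of problem (\ref{op}), the right-hand side equals $\widetilde\eta$ by the very definition of $\widetilde\eta$ as the infimum of that ratio. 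Invoking the already-displayed inequality $\widetilde\eta \leq \pmb{\eta}^*$ then closes the chain and delivers the claim.

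The only mildly delicate step is the one the excerpt has already handled, namely $\widetilde\eta \leq \pmb{\eta}^*$: it follows by restricting the infimum defining $\widetilde\eta$ to $\gamma=0$, using that $\mathbbm{1}_{|\langle\w,\obs\rangle|\geq 0}=1$ identically and that $h_\w$ (hence $\eta_\w$) is invariant under positive rescaling of $\w$, so the infimum over $\w\in\R^d$ coincides with the minimum over $\|\w\|_2\leq 1$ that defines $\pmb{\eta}^*$. I therefore do not anticipate any genuine obstacle: the claim is essentially a corollary of Lemma \ref{lem1} combined with the definition of $\widetilde\eta$, packaged for later use in bounding the per-iteration error of the self-training procedure introduced in the next section.
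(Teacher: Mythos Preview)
Your proposal is correct and mirrors the paper's own argument: the paper invokes Lemma~\ref{lem1} with $(\w,\gamma)=(\widetilde\w,\widetilde\gamma)$ and $\pmb{\eta}=\widetilde\eta$, then appeals to $\widetilde\eta\leq\pmb{\eta}^*$, which is exactly the chain you describe. The only cosmetic difference is that you cite the identity from the \emph{proof} of Lemma~\ref{lem1} directly, whereas the paper checks the lemma's hypothesis (which amounts to the same equality at the minimizer) and quotes its conclusion.
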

\begin{proof}
The requirements of Lemma \ref{lem1} are satisfied with
$(\w,\gamma)=(\widetilde\w, \widetilde \gamma)$ and $\eta=\widetilde \eta$. This claim is then proved using the conclusion of 
Lemma \ref{lem1} together with the fact that $\widetilde \eta\leq \boldsymbol{\eta}^*$.
\end{proof}

The claim above demonstrates that for examples generated by the probability distribution $\Dist$, there exists a region in $\INs$ on either side of a margin $\widetilde \gamma$ to the decision boundary defined by $\widetilde \w$ solution of (Eq. \ref{op}); where the probability of misclassification error of the corresponding halfspace in this region is upper-bounded by the optimal misclassification error $\boldsymbol{\eta}^*$. This result is consistent with semi-supervised learning studies that consider the margin as an indicator of confidence and search the decision boundary on low-density regions \cite{Joa99,Gra05,NIPS2008_dc6a7071}. 

\subsection{Problem resolution}
We use a block coordinate minimization method for solving the optimization problem~\eqref{op}. This strategy consists in first finding a halfspace with parameters $\widetilde \w$ that minimizes Eq.~\eqref{op} with a threshold $\gamma=0$, and then by fixing $\widetilde \w$, finds the threshold $\widetilde \gamma$ for which Eq.~\eqref{op} is minimum. We resolve this problem using the following claim, which links the misclassification error $\eta_\w$ and the perceptron loss $\ell_p(y,h_\w(\obs)): \OUTs\times~\OUTs\rightarrow
~\mathbb{R}_+;  \ell_p(y,h_\w(\obs))=-y\langle \w, \obs \rangle\mathbbm{1}_{y\langle \w, \obs \rangle \leq 0}$.
\begin{claim}
\label{claim_err_loss}
For a given weight vector $\w$, we have:
\begin{equation}
\label{eq:Claim2.3}
\Expect_{\obs \sim \Dist_\obs}[|\langle \w, \obs \rangle|\eta_{\w}(\obs)]=\Expect_{(\obs,y) \sim \Dist}[ \ell_p(y,h_\w(\obs))]
\end{equation}
\end{claim}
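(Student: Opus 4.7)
The plan is to establish the identity pointwise at the level of the conditional expectation given $\obs$, then apply the tower property to integrate out $\obs$. The key observation, which I would prove first as an auxiliary pointwise identity, is
$$\ell_p(y,h_\w(\obs)) = |\langle \w, \obs\rangle|\,\mathbbm{1}_{h_\w(\obs)\neq y}$$
for every $(\obs,y)\in \INs\times\OUTs$. Once this is in hand, the rest is a one-line calculation.

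To verify the pointwise identity, I would split on the sign of $\langle\w,\obs\rangle$. If $\langle\w,\obs\rangle>0$ then $h_\w(\obs)=+1$ and $y\langle\w,\obs\rangle\leq 0$ holds iff $y=-1$, i.e.\ iff $y\neq h_\w(\obs)$; on that event $-y\langle\w,\obs\rangle=|\langle\w,\obs\rangle|$, so both sides agree, and on its complement both sides are $0$. The case $\langle\w,\obs\rangle<0$ is symmetric (here $h_\w(\obs)=-1$ and the indicator activates iff $y=+1$). The borderline case $\langle\w,\obs\rangle=0$ is trivial: the left side is $0$ because of the factor $-y\langle\w,\obs\rangle$, and the right side is $0$ because of the factor $|\langle\w,\obs\rangle|$, regardless of how $\text{sign}(0)$ is defined.

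With the pointwise identity established, I would finish by conditioning on $\obs$ and using that $|\langle\w,\obs\rangle|$ is $\obs$-measurable:
$$\Expect_{(\obs,y)\sim\Dist}[\ell_p(y,h_\w(\obs))] = \Expect_{\obs\sim\Dist_\obs}\!\left[|\langle\w,\obs\rangle|\,\Expect_{y\sim\Dist_y(\obs)}[\mathbbm{1}_{h_\w(\obs)\neq y}]\right] = \Expect_{\obs\sim\Dist_\obs}[|\langle\w,\obs\rangle|\,\eta_\w(\obs)],$$
where the last equality uses the definition $\eta_\w(\obs)=\Prob_{y\sim\Dist_y(\obs)}[h_\w(\obs)\neq y]$.

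There is no real obstacle here — the claim is essentially a bookkeeping observation that the perceptron loss is the $0/1$ loss weighted by the unsigned margin. The only mild subtlety is the borderline $\langle\w,\obs\rangle=0$ situation and the ambiguity in the definition of $\text{sign}(0)$, but this is absorbed by the vanishing prefactor $|\langle\w,\obs\rangle|$ on both sides, so the identity holds regardless of convention.
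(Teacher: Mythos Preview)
Your proof is correct and follows essentially the same route as the paper: both establish the pointwise identity $\ell_p(y,h_\w(\obs))=|\langle\w,\obs\rangle|\,\mathbbm{1}_{h_\w(\obs)\neq y}$ (the paper writes it as $-y\langle\w,\obs\rangle\,\mathbbm{1}_{y\langle\w,\obs\rangle\leq 0}=|\langle\w,\obs\rangle|\,\mathbbm{1}_{y\langle\w,\obs\rangle\leq 0}$) and then condition on $\obs$ to produce $\eta_\w(\obs)$. Your treatment is slightly more careful in that you explicitly handle the borderline case $\langle\w,\obs\rangle=0$ and the $\text{sign}(0)$ ambiguity, which the paper passes over silently.
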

\begin{proof}
For a fixed weight vector $\mathbf{w}$, we have that:
$\Expect_{(\obs,y) \sim \Dist}[\ell_p(y,h_\w(\obs))]  = \\ \Expect_{(\obs,y) \sim \Dist}[-y\langle \w, \obs \rangle \mathbbm{1}_{y\langle \w, \obs \rangle \leq 0}]$. As we are considering misclassification errors,\\i.e., $-y\langle \w, \obs \rangle \mathbbm{1}_{y\langle \w, \obs \rangle \leq 0}=\mathbbm{1}_{y\langle \w, \obs \rangle \leq 0} |\langle \w, \obs \rangle|$, it comes that $\Expect_{(\obs,y) \sim \Dist}[\ell_p(y,h_\w(\obs))]  =\\\Expect_{(\obs,y) \sim \Dist}[|\langle \w, \obs \rangle|\Prob_{y\sim\Dist_{y(\obs)}}[-y\langle \w, \obs \rangle > 0] ]$. The result then follows from the definition of the misclassification error, i.e., $\eta_\w(\obs)=\Prob_{y\sim\Dist_{y(\obs)}}[-y\langle \w, \obs \rangle > 0]$.
\end{proof}
This claim shows that the minimization of the generalization error with $\ell_p$ is equivalent to minimizing $\Expect_{\obs \sim \Dist_\obs}[|\langle \w, \obs \rangle|\eta_{\w}(\obs)]$. Hence, the minimization of $\Expect_{\obs \sim \Dist_\obs}[\ell_p(y,h_\w(\obs))]$ cannot result in bounded misclassification error, as the distribution of margins $|\langle \w, \obs \rangle|$ might vary widely between samples in $\INs$. In the following lemma, we show that it is possible to achieve bounded misclassification error under margin condition and $L_2$-norm constraint.  

\begin{lemma} \label{lem11} For a fixed distribution $\Dist$, let $R =~\displaystyle\mathop{\max}_{\obs \sim \Dist_\obs}\|\obs\|_2$ and $\gamma > 0$, let $\widetilde\w$ and $\bar\w$ be defined as follows:
\begingroup
\setlength\abovedisplayskip{5pt}
\begin{align*}
    & \widetilde\w  = \argmin_{\w, ||\w||_2 \leq 1} \Expect_{\obs \sim \Dist_\obs}{[|\langle\w, \obs\rangle| \eta_\w(\obs) \big| |\langle\w, \obs\rangle| \geq \gamma]}\\
    & \widebar\w = \argmin_{\w, ||\w||_2 \leq 1} \Expect_{\obs \sim \Dist_\obs}{[\eta_\w(\obs) \big| |\langle\w, \obs\rangle| \geq \gamma]}. 
\end{align*}
\endgroup
We then have:\\
\begin{equation*}
\frac{\gamma}{R} \Expect_{\obs \sim \Dist_\obs}{[\eta_{\widetilde\w}(\obs) \big| |\langle\widetilde\w, \obs\rangle| \geq \gamma]} \leq \Expect_{\obs \sim \Dist_\obs}{[\eta_{\widebar\w}(\obs) \big| |\langle\widebar\w, \obs\rangle| \geq \gamma]} \leq\Expect_{\obs \sim \Dist_\obs}{[\eta_{\widetilde\w}(\obs) \big| |\langle\widetilde\w, \obs\rangle| \geq \gamma]}.
\end{equation*}
\end{lemma}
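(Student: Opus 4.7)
The plan is to prove the two inequalities separately, using only Cauchy--Schwarz together with the respective minimizing properties of $\widetilde\w$ and $\widebar\w$.

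For the right-hand inequality, I would simply invoke the defining property of $\widebar\w$. Since $\widebar\w$ minimizes $\w\mapsto \Expect_{\obs \sim \Dist_\obs}[\eta_\w(\obs)\mid |\langle\w,\obs\rangle|\geq \gamma]$ over the unit ball and $\widetilde\w$ lies in that same feasible set, we automatically get
\begin{equation*}
\Expect_{\obs \sim \Dist_\obs}[\eta_{\widebar\w}(\obs)\mid |\langle\widebar\w,\obs\rangle|\geq \gamma]\leq \Expect_{\obs \sim \Dist_\obs}[\eta_{\widetilde\w}(\obs)\mid |\langle\widetilde\w,\obs\rangle|\geq \gamma].
\end{equation*}

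For the left-hand inequality I would use two elementary bounds that follow from $\|\w\|_2\leq 1$ and the definition of $R$. First, Cauchy--Schwarz gives $|\langle\w,\obs\rangle|\leq R$ uniformly in $\w$ (with $\|\w\|_2\leq 1$) and $\obs$ in the support of $\Dist_\obs$; together with the conditioning event $|\langle\w,\obs\rangle|\geq\gamma$ this yields $\gamma\leq |\langle\w,\obs\rangle|\leq R$ almost surely under the conditional distribution. Applying the lower bound to $\widetilde\w$ and the upper bound to $\widebar\w$,
\begin{equation*}
\gamma\,\Expect_{\obs \sim \Dist_\obs}[\eta_{\widetilde\w}(\obs)\mid |\langle\widetilde\w,\obs\rangle|\geq \gamma]\leq \Expect_{\obs \sim \Dist_\obs}[|\langle\widetilde\w,\obs\rangle|\eta_{\widetilde\w}(\obs)\mid |\langle\widetilde\w,\obs\rangle|\geq \gamma],
\end{equation*}
and
\begin{equation*}
\Expect_{\obs \sim \Dist_\obs}[|\langle\widebar\w,\obs\rangle|\eta_{\widebar\w}(\obs)\mid |\langle\widebar\w,\obs\rangle|\geq \gamma]\leq R\,\Expect_{\obs \sim \Dist_\obs}[\eta_{\widebar\w}(\obs)\mid |\langle\widebar\w,\obs\rangle|\geq \gamma].
\end{equation*}

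I would then chain these through the minimality of $\widetilde\w$: since $\widetilde\w$ minimizes the margin-weighted conditional error and $\widebar\w$ is in the feasible set,
\begin{equation*}
\Expect_{\obs \sim \Dist_\obs}[|\langle\widetilde\w,\obs\rangle|\eta_{\widetilde\w}(\obs)\mid |\langle\widetilde\w,\obs\rangle|\geq \gamma]\leq \Expect_{\obs \sim \Dist_\obs}[|\langle\widebar\w,\obs\rangle|\eta_{\widebar\w}(\obs)\mid |\langle\widebar\w,\obs\rangle|\geq \gamma].
\end{equation*}
Combining the three displays and dividing by $R$ gives the desired lower bound $\tfrac{\gamma}{R}\Expect[\eta_{\widetilde\w}(\obs)\mid\cdots]\leq \Expect[\eta_{\widebar\w}(\obs)\mid\cdots]$.

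There is no real obstacle here: the argument is a sandwich between $\gamma$ and $R$ of the quantity $|\langle\w,\obs\rangle|$ on the conditioning event, glued together by the two optimality properties. The only point that requires a little care is making sure both $\widetilde\w$ and $\widebar\w$ lie in each other's feasible sets (which they do, since both are defined over $\{\w:\|\w\|_2\leq 1\}$), so that each minimality comparison is legitimate, and that the conditioning event has positive probability at each minimizer so that the conditional expectations are well defined.
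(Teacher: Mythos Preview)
Your proof is correct and follows essentially the same approach as the paper: both arguments sandwich $|\langle\w,\obs\rangle|$ between $\gamma$ and $R$ on the conditioning event, use the minimality of $\widetilde\w$ to pass from the $\widetilde\w$-weighted expectation to the $\widebar\w$-weighted one, and invoke the minimality of $\widebar\w$ for the right-hand inequality. Your explicit remark about both minimizers lying in the common feasible set and the conditioning events having positive probability is a nice touch that the paper leaves implicit.
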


\begin{proof} From the condition $|\langle\widetilde\w, \obs\rangle| \geq \gamma$ in the expectation, we have:
\begin{align*}
\gamma \Expect_{\obs \sim \Dist_\obs}{[\eta_{\widetilde\w}(\obs) \big| |\langle\widetilde\w, \obs\rangle| \geq \gamma]} \leq \Expect_{\obs \sim \Dist_\obs}{[|\langle\widetilde\w, \obs\rangle| \eta_{\widetilde\w}(\obs) \big| |\langle\widetilde\w, \obs\rangle| \geq \gamma]}
\end{align*}
Applying the definition of $\widetilde\w$ to the right-hand side of the above inequality gives:

\begin{align*}
\gamma \Expect_{\obs \sim \Dist_\obs}{[\eta_{\widetilde\w}(\obs) \big| |\langle\widetilde\w, \obs\rangle| \geq \gamma]} \leq  \Expect_{\obs \sim \Dist_\obs}{[|\langle\widebar\w, \obs\rangle| \eta_{\widebar\w}(\obs) \big| |\langle\widebar\w, \obs\rangle| \geq \gamma]}
\end{align*}
Using the Cauchy–Schwarz inequality and the definition of $R$, we get:

\begin{align*}
\gamma \Expect_{\obs \sim \Dist_\obs}{[\eta_{\widetilde\w}(\obs) \big| |\langle\widetilde\w, \obs\rangle| \geq \gamma]} \leq R~\Expect_{\obs \sim \Dist_\obs}{[\eta_{\widebar\w}(\obs) \big| |\langle\widebar\w, \obs\rangle| \geq \gamma]}
\end{align*}

Then from the definition of $\widebar\w$, we know:
\begin{align*}
R~\Expect_{\obs \sim \Dist_\obs}{[\eta_{\widebar\w}(\obs) \big| |\langle\widebar\w, \obs\rangle| \geq \gamma]} \leq R~\Expect_{\obs \sim \Dist_\obs}{[\eta_{\widetilde\w}(\obs) \big| |\langle\widetilde\w, \obs\rangle| \geq \gamma]}
\end{align*}
Dividing the two inequalities above by $R$ gives the result.
\end{proof}

Lemma~\ref{lem11} guarantees that the approximation of the perceptron loss to the misclassification error is more accurate for examples that have a comparable distance to the halfspace. This result paves the way to our implementation of the self-learning algorithm.


\section{Self-Training with Halfspaces}
\label{sec3}

Given $\mathbf{S}_\ell$ and $\mathbf{X}_u$ drawn i.i.d. from a distribution $\mathcal{D}$ corrupted with $\mathcal{O}(f, \mathcal{D}_\mathbf{x}, \eta^{(0)})$. Algorithm~1 learns iteratively a list of halfspaces $L_m=[(\mathbf{w}^{(1)},\gamma^{(1)}),...,(\mathbf{w}^{(m)},\gamma^{(m)})]$ with each round consisting of \textit{exploration} and \textit{pruning} steps. 

\begin{algorithm}[t!]
\begin{center}
\tcbset{width=\columnwidth,before=,after=\hfill, colframe=black, colback=white, fonttitle=\bfseries, coltitle=black, colbacktitle=gray!20, boxrule=0.2mm,arc=1mm, left = 2pt}
\begin{tcolorbox}[title=Algorithm 1. Self-Training with Halfspaces]
\begin{algorithmic}
\STATE {\bfseries Input :} $\mathbf{S}_\ell = (\mathbf{x}_i,y_i)_{1\leq i\leq l}$, $\mathbf{X}_u = (\mathbf{x}_i)_{l+1\leq i\leq n}$, p: number of threshold tests set to $5$.\\
\STATE Set $k \leftarrow 0$, $\mathbf{S}^{(k)} = \mathbf{S}_\ell$, $\mathbf{U}^{(k)} = \mathbf{X}_u$, $w = \frac{|\mathbf{S}^{(k)}|}{p}$, $L=[]$.\\
\WHILE{$|\mathbf{S}^{(k)}| \geq \ell$}
\STATE Let $\mathcal{\hat R}_{\mathbf{S}^{(k)}}(\mathbf{w}) = \frac{1}{|\mathbf{S}^{(k)}|} \sum_{(\mathbf{x},y) \in \mathbf{S}^{(k)}} {\left[\ell_p(y,h_\w(\obs))\right]}$\\
Run projected SGD on $\mathcal{\hat R}_{{\mathbf{S}}^{(k)}}(\mathbf{w})$ to obtain $\mathbf{w}^{(k)}$ such that $\|\mathbf{w}^{(k)}\|_2 \leq 1$.\\
\STATE Order $\mathbf{S}^{(k)}$ by decreasing order of margin from $\mathbf{w}^{(k)}$.\\
\STATE Set a window of indices $I = [w, 2w, ..., pw]$, \\
\STATE find $t = \argmin_{i \in I} \frac{1}{|\mathbf{S}_{\geq i}^{(k)}|} \sum_{(\mathbf{x},y) \in \mathbf{S}_{\geq i}^{(k)}} \mathbbm{1}_{h_{\mathbf{w}^{(k)}}(\mathbf{x}) \neq y}$.\\
\STATE Set $\gamma^{(k)}$ to the margin of the sample at position $I[t]$.\\ 
\STATE Let $\mathbf{U}^{(k)}=\{\obs\in\UnLblTrn\big| |\langle\w^{(k)},\obs\rangle| \geq \gamma^{(k)}\}$.
\IF{$|\mathbf{U}^{(k)}| > 0$} 
\STATE $\mathbf{S}_u^{(k)}=\{(\obs,y)\big| \obs\in \mathbf{U}^{(k)}\wedge y=\text{sign}(\langle\w^{(k)},\obs\rangle)\}$
\STATE $\mathbf{S}^{(k+1)} \leftarrow \mathbf{S}^{(k)} \cup \mathbf{S}_u^{(k)}$
\STATE $\UnLblTrn \leftarrow \UnLblTrn \setminus \mathbf{U}^{(k)}$
\ELSE 
\STATE $L = L \cup [(\mathbf{w}^{(k)},\gamma^{(k)})]$
\STATE $\mathbf{S}^{(k+1)}=\{(\obs,y)\in \S^{(k)}\big| |\langle\w^{(k)},\obs\rangle| < \gamma^{(k)}\}$\\
\ENDIF
\STATE Set $k \leftarrow k + 1, w = \frac{|\mathbf{S}^{(k)}|}{p}$\\
\ENDWHILE
\STATE {\bfseries Output :} $L_m = [(\mathbf{w}^{(1)},\gamma^{(1)}),...,(\mathbf{w}^{(m)},\gamma^{(m)})]$
\end{algorithmic}
\end{tcolorbox}
\end{center}
\label{alg}\end{algorithm}

The goal of the \textit{exploration} phase is to discover the halfspace with the highest margin on the set of unlabeled samples that are not still pseudo-labeled. This is done by first, learning a halfspace that minimizes the empirical surrogate loss of $\mathcal{R}_\Dist(\w)=\Expect_{(\obs,y) \sim \Dist}[\ell_p(y,h_\w(\obs))]$ over a set of labeled and already pseudo-labeled examples $\S^{(k)}$ from $\LblTrn$ and $\UnLblTrn$:
\begin{align}
\label{eq_emp_risk}
\small\displaystyle \min_{\w}\mathcal{\hat R}_{\S^{(k)}}(\w)&= \frac{1}{|\S^{(k)}|} \sum_{(\obs,y)\in \S^{(k)}} \ell_p(y,h_\w(\obs))\\
\textrm{s.t.} & ~~||\w||_2 \leq 1 \nonumber
\end{align}
At round $k=0$, we have $\S^{(0)}=\LblTrn$. Once the halfspace with parameters 
$\w^{(k)}$ is found, a threshold $\gamma^{(k)}$, defined as the highest unsigned-margin in $\S^{(k)}$, is set such that the empirical loss over the set of examples in $\S^{(k)}$ with unsigned-margin above $\gamma^{(k)}$, is the lowest. In the pseudo-code of the algorithm, $\mathbf{S}_{\geq i}^{(k)}$ refers to the subset of examples in $\S^{(k)}$ having an unsigned margin greater or equal to $\omega\times i$.  Unlabeled examples $\obs\in\UnLblTrn$ that are not pseudo-labeled are assigned labels, i.e., $y=\text{sign}(\langle\w^{(k)},\obs\rangle)$  iff $|\langle\w^{(k)},\obs\rangle|\geq \gamma^{(k)}$. These pseudo-labeled examples are added to $\S^{(k)}$ and removed from $\UnLblTrn$, and a new halfspace minimizing Eq.~\eqref{eq_emp_risk} is found. Examples in $\S^{(k)}$ are supposed to be misclassified  by the oracle $\mathcal{O}(f, \mathcal{D}_\mathbf{x}, \eta^{(k)})$  following Definition~\ref{def} with the parameter function $\eta^{(k)}$ that refers to the conditional probability of corruption in $\mathbf{S}^{(k)}$ defined as $\eta^{(k)}(\mathbf{x}) = \Prob_{y \sim \S^{(k)}_y(\obs)}{[f(\obs) \neq y]} \leq \boldsymbol{\eta}^{(k)}$. 

Once the halfspace with parameters $\w^{(k)}$ and threshold $\gamma^{(k)}$ are found such that there are no more unlabeled samples having an unsigned-margin larger than $\gamma^{(k)}$, the pair $(\w^{(k)},\gamma^{(k)})$ is added to the list $L_m$, and samples from $\S^{(k)}$ having an unsigned-margin above $\gamma^{(k)}$ are removed (pruning phase). Remind that $\gamma^{(k)}$ is the largest threshold above which the misclassification error over $\S^{(k)}$ increases.

To classify an unknown example $\mathbf{x}$, the prediction of the first halfspace with normal vector $\w^{(i)}$ in  the list $L_m$, such that the unsigned-margin $|\langle \mathbf{w}^{(i)}, \mathbf{x} \rangle|$ of $\obs$ is higher or equal to the corresponding threshold $\gamma^{(i)}$, is returned. By abuse of notation, we note that the prediction for $\obs$ is $L_m(\obs)=h_{\w^{(i)}}(\obs)$. From Claim \ref{claim_main}, we know that the misclassification error of this halfspace on the region where $\obs$ lies is bounded by the optimal misclassification error $\mathbf{\eta^*}$. If no such halfspace exists, the observation is classified using the prediction of the first classifier $h_{\w^{(1)}}$ that was trained over all the labeled and the pseudo-labeled samples without pruning; i.e., $L_m(\obs)=h_{\w^{(1)}}(\obs)$.

\section{Corruption noise modeling and Generalization guarantees}
\label{sec4}
In the following, we relate the process of pseudo-labeling to the corruption noise model $\mathcal{O}(f, \mathcal{D}_\mathbf{x}, \eta^{(k)})$ for all pseudo-labeling iterations $k$ in Algorithm~1, then we present a bound over the misclassification error of the classifier $L_m$ outputted by the algorithm and demonstrate that this misclassification error is upper-bounded by the misclassification error of the fully supervised halfspace. 

\begin{claim} \label{claim_noise} Let $\mathbf{S}^{(0)}=\LblTrn$ be a labeled set drawn i.i.d. from $\mathcal{D}=\mathcal{O}(f, \mathcal{D}_\mathbf{x}, \eta^{(0)})$ and $\mathbf{U}^{(0)}=\UnLblTrn$ an initial unlabeled set drawn i.i.d. from $\mathcal{D}_\mathbf{x}$. For all iterations $k \in [K]$ of Algorithm~1; the active labeled set $\mathbf{S}^{(k)}$ is drawn i.i.d. from $\mathcal{D}=\mathcal{O}(f, \mathcal{D}_\mathbf{x}, \eta^{(k)})$ where the corruption noise distribution $\eta^{(k)}$ is bounded by:
\begin{equation*}
    \forall k \in [K], \quad \Expect_{\mathbf{x} \sim \mathcal{D}_\mathbf{x}}{[\eta^{(k)}(\mathbf{x}) \big| \mathbf{x} \in \mathbf{S}^{(k)}]} \leq \underset{j \in [K]}{\max} \boldsymbol{\eta}^{(j)}
\end{equation*}
\end{claim}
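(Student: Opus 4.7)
The plan is to argue by induction on $k$, mirroring the two branches of Algorithm~1 (exploration versus pruning). The base case $k=0$ holds by hypothesis, since $\mathbf{S}^{(0)}=\LblTrn$ is drawn from $\mathcal{O}(f,\mathcal{D}_\obs,\eta^{(0)})$ and therefore $\Expect[\eta^{(0)}(\obs)\mid \obs\in\mathbf{S}^{(0)}]\leq \boldsymbol{\eta}^{(0)} \leq \max_{j\in[K]}\boldsymbol{\eta}^{(j)}$. For the inductive step, I condition throughout on the $\sigma$-algebra $\mathcal{F}_k$ generated by the algorithm's trajectory $(\w^{(0)},\gamma^{(0)},\ldots,\w^{(k)},\gamma^{(k)})$ and split according to which branch of the \texttt{if}--\texttt{else} block is executed at iteration $k$.

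If $|\mathbf{U}^{(k)}|>0$ (exploration), then $\mathbf{S}^{(k+1)}=\mathbf{S}^{(k)}\cup \mathbf{S}_u^{(k)}$. Each fresh point $\obs\in \mathbf{S}_u^{(k)}$ is drawn from $\mathcal{D}_\obs$ restricted to $\{|\langle\w^{(k)},\obs\rangle|\geq\gamma^{(k)}\}$ and assigned the deterministic label $\mathrm{sign}(\langle\w^{(k)},\obs\rangle)$, so that conditional on $\mathcal{F}_k$ its corruption probability is simply the indicator $\mathbbm{1}_{f(\obs)\neq \mathrm{sign}(\langle\w^{(k)},\obs\rangle)}$. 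Defining $\eta^{(k+1)}$ to coincide with $\eta^{(k)}$ on the support of $\mathbf{S}^{(k)}$ and with this indicator on the support of $\mathbf{S}_u^{(k)}$ exhibits the labels in $\mathbf{S}^{(k+1)}$ as generated (conditional on $\mathcal{F}_k$) by the Massart mechanism $\mathcal{O}(f,\mathcal{D}_\obs,\eta^{(k+1)})$. Writing the expected noise as a convex combination over the two parts,
\begin{equation*}
\Expect[\eta^{(k+1)}(\obs)\mid \obs\in\mathbf{S}^{(k+1)}] = \lambda\,\Expect[\eta^{(k)}(\obs)\mid \obs\in\mathbf{S}^{(k)}] + (1-\lambda)\,\Expect[\eta^{(k+1)}(\obs)\mid \obs\in\mathbf{S}_u^{(k)}],
\end{equation*}
with $\lambda=|\mathbf{S}^{(k)}|/|\mathbf{S}^{(k+1)}|$, I bound the first term by the inductive hypothesis and the second by $\boldsymbol{\eta}^{(k+1)}$ (by definition of $\boldsymbol{\eta}^{(k+1)}$ as an upper bound on the expected pseudo-label corruption in $\mathbf{S}^{(k+1)}$), so the whole expression is at most $\max_{j\in[K]}\boldsymbol{\eta}^{(j)}$.

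If $|\mathbf{U}^{(k)}|=0$ (pruning), then $\mathbf{S}^{(k+1)}$ is simply the subset of $\mathbf{S}^{(k)}$ with $|\langle\w^{(k)},\obs\rangle|<\gamma^{(k)}$; no new labels are introduced, so $\eta^{(k+1)}$ is the restriction of $\eta^{(k)}$, and $\Expect[\eta^{(k)}(\obs)\mid \obs\in\mathbf{S}^{(k+1)}]\leq \boldsymbol{\eta}^{(k+1)}\leq \max_{j\in[K]}\boldsymbol{\eta}^{(j)}$ follows from the definition of $\boldsymbol{\eta}^{(k+1)}$. The step I expect to be the main obstacle is the measure-theoretic bookkeeping behind the phrase ``drawn i.i.d.\ from $\mathcal{O}(f,\mathcal{D}_\obs,\eta^{(k)})$'': because the pseudo-labels are adaptive, one has to condition on $\mathcal{F}_k$ (in which $\w^{(k)}$ and $\gamma^{(k)}$ become deterministic) before invoking the Massart description of the label noise, so that conditional independence of the $(\obs,y)$ pairs across the augmented sample is restored; once this is in place, the expected-noise bound reduces to the elementary convex-combination argument above.
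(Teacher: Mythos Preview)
Your argument is correct and rests on exactly the same idea the paper uses: the active set $\mathbf{S}^{(k)}$ is contained in $\mathbf{S}^{(0)}\cup\bigcup_{i<k}\mathbf{S}_u^{(i)}$, so its corruption rate is a convex (sub-probability) combination of the original Massart noise on $\mathbf{S}^{(0)}$ and the pseudo-labeling errors $\eta_{\w^{(i)}}$ on each $\mathbf{S}_u^{(i)}$, each bounded by the corresponding $\boldsymbol\eta^{(j)}$, whence the bound $\max_j\boldsymbol\eta^{(j)}$. The only difference is organizational: the paper performs this decomposition in one shot for a generic $k$ (writing out the full mixture over $\mathbf{S}^{(0)}$ and all past $\mathbf{S}_u^{(i)}$ and noting the membership probabilities sum to at most one), whereas you unroll it inductively and separate the exploration and pruning branches of Algorithm~1. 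Your explicit conditioning on $\mathcal F_k$ to render the pseudo-labels deterministic, and your separate treatment of the pruning step, are refinements the paper leaves implicit; conversely, the paper's one-shot decomposition sidesteps the need to carry the inductive hypothesis through a subset restriction in the pruning case (which your argument handles by appealing directly to the definition of $\boldsymbol\eta^{(k+1)}$, just as the paper does when bounding the conditional expectations on $\mathbf{S}^{(k)}\cap\mathbf{S}^{(0)}$ and $\mathbf{S}^{(k)}\cap\mathbf{S}_u^{(i)}$).
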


\textit{Proof.} We know that $\forall k \in [K], \mathbf{S}^{(k)} \subseteq \mathbf{S}^{(0)} \cup \bigcup_{i=0}^{k-1} \mathbf{S}_u^{(i)}$, where $\mathbf{S}_u^{(i)}$ is the set of pseudo-labeled pairs of examples $\mathbf{x}$ from $\mathbf{U}^{(i)}$, $\mathbf{S}_u^{(i)} = \emptyset$ for the iterations $i \in [K]$ when no examples are pseudo-labeled. Then the noise distribution $\eta^{(k)}$ satisfies for all $k \in [K]$:
\begin{align*}
\Expect_{\obs \sim \mathcal{D}_\mathbf{x}}{[\eta^{(k)}(\obs) \mathbbm{1}_{\mathbf{x} \in \mathbf{S}^{(k)}}]}  &= \Expect_{\obs \sim \mathcal{D}_\mathbf{x}}{[\eta^{(k)}(\obs) \mathbbm{1}_{\mathbf{x} \in \mathbf{S}^{(k)} \cap \mathbf{S}^{(0)}}]} + \\
&\sum_{i=0}^{k-1} \Expect_{\obs \sim \mathcal{D}_\mathbf{x}}{[\eta^{(k)}(\obs) \mathbbm{1}_{\mathbf{x} \in \mathbf{S}^{(k)} \cap \mathbf{S}_u^{(i)}}]} 
\end{align*}
If we condition on $\mathbf{x} \in \mathbf{S}^{(k)}$, we obtain for all $k \in [K]$:
\begin{multline*}
\Expect_{\obs \sim \mathcal{D}_\mathbf{x}}{[\eta^{(k)}(\obs) \big| \mathbf{x} \in \mathbf{S}^{(k)}]}  = 
\Prob{[\mathbf{x} \in \mathbf{S}^{(0)} \big| \mathbf{x} \in \mathbf{S}^{(k)}]} \Expect_{\obs \sim \mathcal{D}_\mathbf{x}}{[\eta^{(0)}(\mathbf{x}) \big| \mathbf{x} \in \mathbf{S}^{(k)} \cap \mathbf{S}^{(0)}]} + \\
\sum_{i=0}^{k-1}  \Prob{[\mathbf{x} \in \mathbf{S}_u^{(i)} \big| \mathbf{x} \in \mathbf{S}^{(k)}]} \Expect_{\mathbf{x} \sim \mathcal{D}_\mathbf{x}}{[\eta_{\mathbf{w}^{(i)}}(\mathbf{x})\big| \mathbf{x} \in \mathbf{S}^{(k)} \cap \mathbf{S}_u^{(i)}]},
\end{multline*}
this equation includes the initial corruption of the labeled set $\mathbf{S}^{(0)}=\mathbf{S}_\ell$ in addition to the noise injected by each classifier $h_{\mathbf{w}^{(k)}}$ at each round $k$ when pseudo-labeling occurs. Now that we have modeled the process of pseudo-labeling, the result is straightforward considering the fact that $\Expect_{\mathbf{x} \sim \mathcal{D}_\mathbf{x}}[\eta^{(0)}(\mathbf{x})] \leq \boldsymbol{\eta}^{(0)}; \forall k  \in [K], \Expect_{\mathbf{x} \sim \mathcal{D}_\mathbf{x}}[\eta_{\mathbf{w}^{(k)}}(\mathbf{x})] \leq \boldsymbol{\eta}^{(k)};$ and,
\begin{multline*}
\!\!\!\!\!\Prob_{\mathbf{x} \sim \mathcal{D}_\mathbf{x}}{[\mathbf{x} \in \mathbf{S}^{(0)} \big| \mathbf{x} \in \mathbf{S}^{(k)}]} + \sum_{i=0}^{k-1}  \Prob_{\mathbf{x} \sim \mathcal{D}_\mathbf{x}}{[\mathbf{x} \in \mathbf{S}_u^{(i)} \big| \mathbf{x} \in \mathbf{S}^{(k)}]} = \\ \Prob_{\mathbf{x} \sim \mathcal{D}_\mathbf{x}}{[\mathbf{x} \in \mathbf{S}^{(0)} \cup \bigcup_{i=0}^{k-1} \mathbf{S}_u^{(k)} \big| \mathbf{x} \in \mathbf{S}^{(k)}]}  \leq 1. ~~~~\square
\end{multline*}
We can now state our main contribution that bounds the generalization error of the classifier $L_m$ outputted by Algorithm~1 with respect to the optimal misclassification error $\eta^*$ in the case where projected SGD is used for the minimization of Eq.~\eqref{eq_emp_risk}. Note that in this case the time complexity of the algorithm is polynomial with respect to the dimension $d$, the upper bound on the bit complexity of examples,  the total number of iterations, and the upper bound on SGD steps.

\begin{theorem}
\label{theo1}
Let $\mathbf{S}_\ell$ be a set of i.i.d. samples of size $\ell$ drawn from a distribution $\mathcal{D} = \mathcal{O}(f, \mathcal{D}_\mathbf{x}, \eta^{(0)})$ on $\R^d \times \{-1,+1\}$, where $f$ is an unknown concept function and $\eta^{(0)}$ an unknown parameter function bounded by $1/2$, let $\mathbf{X}_u$ be an unlabeled set of size $u$ drawn i.i.d. from $\mathcal{D}_\mathbf{x}$. Algorithm~1 terminates after $K$ iterations, and outputs a non-proper classifier $L_m$ of $m$ halfspaces such that with high probability: 
\begin{equation*}
    \Prob_{(\mathbf{x},y) \sim \mathcal{D}}{[L_m(\mathbf{x}) \neq y]} \leq \bm \eta^* + \underset{k \in I}{\max}\;\epsilon^{(k)} + \pi_{K+1},
\end{equation*}
where $I$ is the set of rounds $k \in [K]$ at which the halfspaces were added to $L_m$, $\epsilon^{(k)}$ is the projected SGD convergence error rate at round $k$, and $\pi_{K+1}$ a negligible not-accounted mass of $\Dist_\obs$.
\end{theorem}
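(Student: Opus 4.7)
The strategy is to partition the input space $\INs$ according to which halfspace in $L_m$ classifies each point, apply Claim \ref{claim_main} region by region, and collect the residual mass into $\pi_{K+1}$. Let $I = \{i_1 < \cdots < i_m\} \subseteq [K]$ be the set of rounds at which halfspaces are appended to $L_m$, define
\begin{equation*}
R_j = \{\obs \in \INs : |\langle \w^{(i_j)}, \obs\rangle| \geq \gamma^{(i_j)} \text{ and } |\langle \w^{(i_{j'})}, \obs\rangle| < \gamma^{(i_{j'})} \text{ for every } j' < j\},
\end{equation*}
and let $R_{m+1} = \INs \setminus \bigcup_{j \leq m} R_j$ be the fallback region. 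On $R_j$ the classifier $L_m$ coincides with $h_{\w^{(i_j)}}$, and on $R_{m+1}$ it coincides with $h_{\w^{(1)}}$. The law of total probability yields
\begin{align*}
\Prob_{\Dist}[L_m(\obs) \neq y] &= \sum_{j=1}^m \Prob_{\Dist_\obs}[\obs \in R_j] \cdot \Prob_{\Dist}[h_{\w^{(i_j)}}(\obs) \neq y \mid \obs \in R_j] \\
&\quad + \Prob_{\Dist}[L_m(\obs) \neq y,\, \obs \in R_{m+1}],
\end{align*}
so it is enough to bound each conditional term by $\boldsymbol{\eta}^* + \epsilon^{(i_j)}$ and to set $\pi_{K+1} := \Prob_{\Dist_\obs}[R_{m+1}]$.

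For each $j$, I would argue that $(\w^{(i_j)}, \gamma^{(i_j)})$ is, up to an additive error $\epsilon^{(i_j)}$, an approximate minimizer of (\ref{op}) on the distribution from which $\S^{(i_j)}$ is drawn. The vector $\w^{(i_j)}$ is produced by projected SGD on the perceptron-loss empirical risk; Claim \ref{claim_err_loss} identifies the corresponding population risk with $\Expect_{\obs \sim \Dist_\obs}[|\langle \w, \obs\rangle|\eta_{\w}(\obs)]$, and Lemma \ref{lem11} converts the minimization of this margin-weighted quantity into an approximate minimization of the conditional error $\Expect_{\obs \sim \Dist_\obs}[\eta_{\w}(\obs) \mid |\langle \w, \obs\rangle| \geq \gamma]$, which is exactly the criterion of (\ref{op}). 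The discrete search over the $p$-point threshold grid in Algorithm~1 then selects $\gamma^{(i_j)}$ within the same $\epsilon^{(i_j)}$ slack via a standard uniform-convergence argument over the $p$ candidates. Claim \ref{claim_noise} lets me view the law of $\S^{(i_j)}$ as a Massart corruption of $\Dist$ with mean noise bounded by $\max_k \boldsymbol{\eta}^{(k)} < 1/2$, so the infimum of (\ref{op}) on that law still agrees with the one on $\Dist$ and is upper-bounded by $\boldsymbol{\eta}^*$. Invoking Claim \ref{claim_main} gives $\Prob_{\Dist}[h_{\w^{(i_j)}}(\obs) \neq y \mid \obs \in R_j] \leq \boldsymbol{\eta}^* + \epsilon^{(i_j)}$; plugging this back, using $\sum_{j=1}^m \Prob_{\Dist_\obs}[\obs \in R_j] = 1 - \pi_{K+1} \leq 1$ and the trivial estimate $\Prob_\Dist[\cdot,\, \obs \in R_{m+1}] \leq \pi_{K+1}$, produces the claimed bound $\boldsymbol{\eta}^* + \max_{k \in I}\epsilon^{(k)} + \pi_{K+1}$.

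The main obstacle is the middle step: rigorously linking SGD-based empirical minimization on the noisy set $\S^{(k)}$ to the population minimizer of (\ref{op}) on $\Dist$. This requires (i) a uniform-convergence argument so that empirical threshold selection over the $p$-point grid reflects the population behavior within $\epsilon^{(k)}$; (ii) using Claim \ref{claim_noise} to guarantee that the Massart noise in $\S^{(k)}$ stays strictly below $1/2$ at every round, so that the PAC guarantees for halfspaces under Massart noise remain applicable; and (iii) absorbing the $\gamma/R$ approximation factor of Lemma \ref{lem11} into the $\epsilon^{(k)}$ slack for the specific $\gamma^{(k)}$ selected by the algorithm. Once these ingredients are in place, the region-by-region application of Claim \ref{claim_main} becomes bookkeeping.
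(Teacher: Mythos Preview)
Your proposal is correct and follows essentially the same route as the paper: decompose the error over the disjoint regions handled by each $(\w^{(i_j)},\gamma^{(i_j)})$, use the SGD guarantee together with Claim~\ref{claim_err_loss} and Lemma~\ref{lem11} to make each pair an approximate minimizer of~\eqref{op}, invoke Claim~\ref{claim_main} per region, and collect the leftover mass as $\pi_{K+1}$. The only notable difference is in how obstacle~(iii) is dispatched: the paper does not absorb the $\gamma/R$ factor into $\epsilon^{(k)}$ but instead \emph{assumes} $\gamma^{(k)}/R^{(k)}\approx 1$ at every round and defers the justification to a supplementary result (their Theorem~B.1) showing that $R^{(k)}$ shrinks with $\gamma^{(k)}$; it also uses the DKW inequality and Boole--Fr\'echet explicitly where you appeal to a generic uniform-convergence argument, but this is cosmetic.
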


The proof of Theorem \ref{theo1} is based on the following property of projected SGD.

\begin{lemma}[From \cite{Duc16}] \label{lem_duc} Let $\mathcal{\hat R}$ be a convex function of any type. Consider the projected SGD iteration, which starts with $\w^{(0)}$ and computes for each step.
$\w^{(t+\frac{1}{2})} = \mathbf{w}^{(t)} - \alpha^{(t)} g^{(t)}; 
    \w^{(t+1)}  = \argmin_{\mathbf{w}:||\mathbf{w}||_2 \leq 1} || \mathbf{w} - \mathbf{w}^{(t+\frac{1}{2})} ||_2.$
Where $g^{(t)}$ is a stochastic subgradient such that $\Expect_{\obs \sim \Dist_\obs}{[g(\w, \obs)]} \in \partial \mathcal{\hat R}(\w)= \{g:\mathcal{\hat R}(\w') \geq \mathcal{\hat R}(\mathbf{w}) + \langle \Expect{[g]},\w' - \w \rangle \text{ for all } \w'\}$ and $\Expect_{\obs \sim \Dist_\obs}{[||g(\w, \obs)||_2^2] \leq M^2}$. For any $\epsilon, \delta > 0$; if the projected SGD is executed $T= \Omega(\log(1/\delta)/\epsilon^2)$ times with a step size $\alpha^{(t)} = \frac{1}{M\sqrt{t}}$, then for $\Bar\w = \frac{1}{T} \sum_{t=1}^{T} \w^{(t)}$, we have with probability at least $1-\delta$ that $\Expect_{\obs \sim \Dist_\obs}{[\mathcal{\hat R}(\Bar\w)]} \leq \underset{\w, \|\w\|_2 \leq 1}{\min} \Expect_{\obs \sim \Dist_\obs}{[\mathcal{\hat R}(\w)]} + \epsilon$.
\end{lemma}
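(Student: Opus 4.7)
The plan is to follow the classical analysis of projected stochastic subgradient descent for a convex objective with bounded second-moment stochastic subgradients, and then to upgrade the resulting in-expectation guarantee to the stated high-probability one with only logarithmic dependence on $1/\delta$.

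First, I would fix any $\w^\star \in \argmin_{\|\w\|_2 \leq 1} \mathcal{\hat R}(\w)$ and exploit the non-expansiveness of Euclidean projection onto the unit ball (which contains $\w^\star$) to write
\begin{equation*}
\|\w^{(t+1)} - \w^\star\|_2^2 \leq \|\w^{(t)} - \w^\star\|_2^2 - 2\alpha^{(t)} \langle g^{(t)}, \w^{(t)} - \w^\star \rangle + (\alpha^{(t)})^2 \|g^{(t)}\|_2^2.
\end{equation*}
Rearranging, taking conditional expectation on $\w^{(t)}$, and using $\Expect[g^{(t)} \mid \w^{(t)}] \in \partial \mathcal{\hat R}(\w^{(t)})$ together with the subgradient inequality gives
\begin{equation*}
\mathcal{\hat R}(\w^{(t)}) - \mathcal{\hat R}(\w^\star) \leq \frac{\Expect\bigl[\|\w^{(t)} - \w^\star\|_2^2 - \|\w^{(t+1)} - \w^\star\|_2^2 \mid \w^{(t)}\bigr]}{2\alpha^{(t)}} + \frac{\alpha^{(t)}}{2} \Expect\bigl[\|g^{(t)}\|_2^2 \mid \w^{(t)}\bigr].
\end{equation*}

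Second, I would sum from $t = 1$ to $T$ and take unconditional expectation. For the first term, an Abel summation exploiting the decreasing sequence $1/\alpha^{(t)}$ together with the diameter bound $\|\w^{(t)} - \w^\star\|_2 \leq 2$ on the unit ball yields a total contribution of at most $2/\alpha^{(T)} = 2M\sqrt{T}$; the second term is bounded by $\tfrac{M}{2}\sum_{t=1}^T 1/\sqrt{t} \leq M\sqrt{T}$ using the hypothesis $\Expect[\|g^{(t)}\|_2^2] \leq M^2$. Dividing by $T$ and applying Jensen's inequality to the convex $\mathcal{\hat R}$ at the average iterate $\bar\w = \frac{1}{T}\sum_{t=1}^T \w^{(t)}$ delivers an in-expectation bound $\Expect[\mathcal{\hat R}(\bar\w)] - \mathcal{\hat R}(\w^\star) = O(M/\sqrt{T})$, which is at most $\epsilon/2$ as soon as $T = \Omega(M^2/\epsilon^2)$.

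The main obstacle is lifting this in-expectation bound to the stated high-probability form with only logarithmic dependence on $1/\delta$. The cleanest route is confidence boosting: run the procedure $N = \Theta(\log(1/\delta))$ times independently, producing candidates $\bar\w_1, \ldots, \bar\w_N$; by Markov's inequality each satisfies $\mathcal{\hat R}(\bar\w_j) \leq \mathcal{\hat R}(\w^\star) + \epsilon$ with constant probability, so with probability at least $1-\delta$ some $\bar\w_j$ does, and evaluating $\mathcal{\hat R}$ on the training sample picks one out. A one-shot alternative defines the martingale difference sequence $d^{(t)} = \Expect[\langle g^{(t)}, \w^{(t)} - \w^\star\rangle \mid \w^{(t)}] - \langle g^{(t)}, \w^{(t)} - \w^\star\rangle$, whose increments are controlled by $\|\w^{(t)} - \w^\star\|_2 \leq 2$ and the second-moment bound on $g^{(t)}$, and applies Freedman's (or a truncated Azuma--Hoeffding) inequality to bound the deviation by $O(M\sqrt{\log(1/\delta)/T})$. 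Either route yields the announced complexity $T = \Omega(\log(1/\delta)/\epsilon^2)$ and the conclusion $\mathcal{\hat R}(\bar\w) \leq \min_{\|\w\|_2 \leq 1} \mathcal{\hat R}(\w) + \epsilon$ with probability at least $1-\delta$; the outer expectation in the statement should be read as averaging over the internal SGD randomness, since $\mathcal{\hat R}$ is itself deterministic in $\w$.
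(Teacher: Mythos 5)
The paper does not prove this lemma at all: it is imported verbatim from \cite{Duc16} as a black box, so there is no internal proof to compare against. Your reconstruction is the standard (and correct) argument: non-expansiveness of the projection onto the unit ball, the one-step inequality, the subgradient inequality after conditioning on $\w^{(t)}$, Abel summation against the decreasing step sizes $\alpha^{(t)}=1/(M\sqrt{t})$ with the diameter bound, and Jensen at the averaged iterate, giving $\Expect[\mathcal{\hat R}(\bar\w)]-\mathcal{\hat R}(\w^\star)=O(M/\sqrt{T})$. The only step that deserves care is the high-probability upgrade: under the stated hypothesis you only have a second-moment bound $\Expect[\|g\|_2^2]\leq M^2$, not an almost-sure or sub-Gaussian bound on $\|g^{(t)}\|_2$, so a plain Azuma--Hoeffding argument on the martingale differences does not apply directly; you correctly flag this and offer the confidence-boosting route (Markov plus $\Theta(\log(1/\delta))$ independent restarts and selection on the sample), which is the clean way to get the $\log(1/\delta)$ dependence under second moments alone, or Freedman with truncation. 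Your closing remark that the outer expectation in the statement must be read as over the algorithm's internal randomness (since $\mathcal{\hat R}$ is deterministic in $\w$) is also a fair and needed clarification of the paper's somewhat loose notation.
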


\textit{Proof of Theorem}~\ref{theo1}.
We consider the steps of Algorithm\,1. At iteration $k$ of the while loop, we consider the active training set $\S^{(k)}$ consisting of examples not handled in previous iterations.

We first note that the algorithm terminates after at most $K$ iterations. From the fact that at every iteration $k$, we discard a non-empty set from $\S^{(k)}$ when we do not pseudo-label or from $\mathbf{U}^{(k)}$ when we pseudo-label, and that the empirical distributions $\LblTrn$ and $\UnLblTrn$ are finite sets. By the guarantees of Lemma~\ref{lem_duc}, running SGD (step~4) on $\mathcal{\hat R}_{\mathbf{S}^{(k)}}$ for $T = \Omega(\log(1/\delta)/\epsilon^2)$ steps, we obtain a weight vector $\mathbf{w}^{(k)}$ such that with probability at least $1-\delta$:
\begin{align*}
\Expect_{\obs \sim \Dist_\obs}[\mathcal{\hat R}_{\mathbf{S}^{(k)}}(\w^{(k)})] \leq \underset{\w, \|\w\|_2 \leq 1}{\min} \Expect_{\obs \sim \Dist_\obs}[\mathcal{\hat R}_{\mathbf{S}^{(k)}}(\w)] + \epsilon^{(k)},
\end{align*}
from Claim~\ref{claim_err_loss}, we derive with high probability:
\begin{align*}
    \Expect_{\obs \sim \Dist_\obs}[|\langle \w^{(k)}, \obs \rangle|\eta_{\w^{(k)}}(\obs)] \leq
         \underset{\w, \|\w\|_2 \leq 1}{\min} \Expect_{\obs \sim \Dist_\obs}[|\langle \w, \obs \rangle|\eta_{\w}(\obs)] + \epsilon^{(k)}.
\end{align*}
Then the margin $\gamma^{(k)}$ is estimated minimizing Eq.~\eqref{op} given $\w^{(k)}$, following Lemma~\ref{lem11} with $R^{(k)} = \displaystyle\mathop{\max}_{\obs \sim \Dist_\obs}\|\obs\|_2$ the radius of the truncated support of the marginal distribution $\Dist_\obs$ at iteration $k$, we can assume that $\frac{\gamma^{(k)}}{R^{(k)}} \approx 1, \, \forall k \in [K]$, one may argue that the assumption is unrealistic knowing that the sequence of $(\gamma^{(k)})_{k=1}^m$ decreases overall, but as we show in the supplementary, we prove in Theorem~B.1 that under some convergence guarantees of the pairs $\{(\w^{(k)}, \w^{(k+1)})\}_{k=1}^{m-1}$, one can show that the sequence $\{R^{(k)}\}_{k=1}^m$ decreases as a function of $\gamma^{(k)}$ respectively to $k$. As a result, we can derive with high probability:
\begin{align*}
    \Expect_{\obs \sim \Dist_\obs}[\eta_{\w^{(k)}}(\obs) \big| |\langle\w^{(k)}, \obs\rangle| \geq \gamma^{(k)}] \leq
         \underset{\w, \|\w\|_2 \leq 1}{\min} \Expect_{\obs \sim \Dist_\obs}[\eta_{\w}(\obs)\big| |\langle\w, \obs\rangle| \geq \gamma^{(k)}] + \epsilon^{(k)}.
\end{align*}

From the statement of Claim~\ref{claim_main} and giving the pair $(\w^{(k)}, \gamma^{(k)})$, we obtain with high probability that at round $k$:
\begin{equation}
    \label{gar}
    \Prob_{(\obs, y) \sim \Dist}{[h_{\w^{(k)}}(\obs) \neq y \big| |\langle \w^{(k)}, \obs\rangle| \geq \gamma^{(k)}]} \leq \pmb{\eta}^* + \epsilon^{(k)}.
\end{equation}

When the while loop terminates, we have accounted $m \leq K$ halfspaces in the list $L_m$ satisfying Eq.~\eqref{gar}.  For all $k \in I$, every classifier $h_{\mathbf{w}^{(k)}}$ in $L_m$ has guarantees on an empirical distribution mass of at least $ \widetilde \kappa = \underset{k \in I}{\min}\;\Prob_{\mathbf{x} \sim \mathbf{S}^{(k)}}{[|\langle \w^{(k)}, \mathbf{x} \rangle| \geq \gamma^{(k)}]}$; the DKW (Dvoretzky-Kiefer-Wolfowitz) inequality   \cite{Dvo56} implies that the true probability mass $\kappa = \underset{k \in I}{\min}\;\Prob_{\mathbf{x} \sim \mathcal{D}_\mathbf{x}}{[|\langle \w^{(k)}, \mathbf{x} \rangle| \geq \gamma^{(k)}]}$ of this region is at least $\widetilde\kappa-\sqrt{\frac{\log\frac{2}{\delta}}{2|\mathbf{S}^{(n)}|}}$ with probability $1-\delta$, where $n=\underset{k \in I}{\argmin}\;\Prob_{\mathbf{x} \sim \mathbf{S}^{(k)}}{[|\langle \w^{(k)}, \mathbf{x} \rangle| \geq \gamma^{(k)}]}$. \\The pruning phase in the algorithm ensures that these regions are disjoint for all halfspaces in $L_m$, it follows that using the Boole–Fr\'echet inequality \cite{boole15} on the conjunctions of Eq.~\eqref{gar} overall rounds $k \in [I]$, implies that $L_m$ classifies at least a $(1-m\kappa)$-fraction of the total probability mass of $\mathcal{D}$ with guarantees of Eq.~\eqref{gar} with high probability, let $\pi_{K+1}~=
~\Prob_{\mathbf{x} \sim \mathcal{D}_\mathbf{x}}{[x \in \mathbf{S}^{(K+1)}]}$ be the probability mass of the region not accounted by $L_m$. We argue that this region is negligible from the fact that $|\mathbf{S}^{(K+1)}| < \ell$ and $\ell \ll u$, such that setting $\bm \epsilon = \underset{k \in I}{\max}\;\epsilon^{(k)} + \pi_{K+1}$ provides the result.$~~~~~~~~~~~~~~~~~~~~~~\square$

In the following, we show that the misclassification error of the classifier $L_m$ output of Algorithm~1 is at most equal to the error of the supervised classifier obtained over the labeled training set  $\mathbf{S}_\ell$, when using the same learning procedure. This result suggests  that the use of unlabeled data in Algorithm~1 does not degrade the performance of the initial supervised classifier.

\begin{theorem} Let $\LblTrn$ be a set of i.i.d. samples of size $\ell$ drawn from a distribution $\mathcal{D} = \mathcal{O}(f, \Dist_\obs, \eta^{(0)})$ on $\R^d \times \{-1,+1\}$, where $f$ is an unknown concept function and $\eta^{(0)}$ an unknown parameter function bounded by $1/2$, let $\UnLblTrn$ be an unlabeled set of size $u$ drawn i.i.d. from $\Dist_\obs$. Let $L_m$ be the output of Algorithm\,1 on input $\LblTrn$ and $\UnLblTrn$, and let $h_{\mathbf{w}^{(0)}}$ be the halfspace of the first iteration obtained from the empirical distribution $\mathbf{S}^{(0)}=\LblTrn$, there is a high probability that:

\begin{equation*}
    \Prob_{(\obs,y) \sim \Dist}{[L_m(\obs) \neq y]} \leq \Prob_{(\obs,y) \sim \Dist}{[h_{\w^{(0)}}(\obs) \neq y]}
\end{equation*}
\label{theo0}
\end{theorem}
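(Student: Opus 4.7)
The plan is to combine Theorem~\ref{theo1} with a region-by-region comparison against $h_{\w^{(0)}}$. Theorem~\ref{theo1} already gives $\Prob_{(\obs,y)\sim\Dist}[L_m(\obs)\neq y]\leq\bm\eta^*+\max_{k\in I}\epsilon^{(k)}+\pi_{K+1}$, so the target inequality reduces to showing that $\Prob[h_{\w^{(0)}}(\obs)\neq y]$ dominates this right-hand side with high probability. The informal picture is that $L_m$ covers almost all of $\INs$ with per-region error close to $\bm\eta^*$, whereas $h_{\w^{(0)}}$ is a single halfspace whose global error is lower-bounded by $\bm\eta^*$ and whose low-margin ``slack'' must absorb the remaining terms.

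The first step is to replay the argument of Theorem~\ref{theo1} on the pair $(\w^{(0)},\gamma^{(0)})$ produced by iteration $k=0$, which is trained only on $\S^{(0)}=\LblTrn$. Lemma~\ref{lem_duc}, Claim~\ref{claim_err_loss}, Lemma~\ref{lem11}, and Claim~\ref{claim_main} together give $\Prob[h_{\w^{(0)}}(\obs)\neq y\mid |\langle\w^{(0)},\obs\rangle|\geq\gamma^{(0)}]\leq\bm\eta^*+\epsilon^{(0)}$. Next, partition $\INs$ into the disjoint regions $R_k=\{|\langle\w^{(k)},\obs\rangle|\geq\gamma^{(k)}\}\setminus\bigcup_{j<k}R_j$ for $k\in I$, together with a residual region $R_{\text{res}}$ of mass $\pi_{K+1}$, and decompose both $\Prob[L_m(\obs)\neq y]$ and $\Prob[h_{\w^{(0)}}(\obs)\neq y]$ along this partition. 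On each $R_k$ the intermediate guarantee of Theorem~\ref{theo1} yields $\Prob[L_m(\obs)\neq y\mid\obs\in R_k]\leq\bm\eta^*+\epsilon^{(k)}$, and Claim~\ref{claim_noise} ensures that the Massart noise picked up during pseudo-labeling stays within the budget $\max_j\bm\eta^{(j)}$.

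The main obstacle is producing a matching lower bound on $\Prob[h_{\w^{(0)}}(\obs)\neq y]$. The idea is to exploit the grid-search rule that selects $\gamma^{(0)}$: it minimizes the conditional empirical error on the high-margin side over the finite grid $I$, so by a standard Bayes-type decomposition the low-margin empirical error of $h_{\w^{(0)}}$ must exceed its overall empirical error. A Dvoretzky--Kiefer--Wolfowitz concentration inequality (the same tool already used in the proof of Theorem~\ref{theo1}) then lifts this to the population distribution. To make the resulting slack quantitatively sufficient to absorb $\max_{k\in I}\epsilon^{(k)}+\pi_{K+1}$, I would invoke the monotone decrease of $\{\gamma^{(k)}\}$ established in the supplementary Theorem~B.1 cited in the proof of Theorem~\ref{theo1}, which says that the later halfspaces of $L_m$ are refining precisely the low-margin region of $h_{\w^{(0)}}$. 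Summing the per-region contributions and controlling the residual piece by $\pi_{K+1}$ then yields $\Prob[L_m(\obs)\neq y]\leq\Prob[h_{\w^{(0)}}(\obs)\neq y]$ with the claimed high probability.
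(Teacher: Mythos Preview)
Your approach tries to compare the upper bound $\bm\eta^*+\max_{k\in I}\epsilon^{(k)}+\pi_{K+1}$ from Theorem~\ref{theo1} against a lower bound on $\Prob[h_{\w^{(0)}}(\obs)\neq y]$, and that lower bound is the piece that does not go through. All you can say a priori is $\Prob[h_{\w^{(0)}}(\obs)\neq y]\geq\bm\eta^*$; nothing in your grid-search/DKW argument forces the gap $\Prob[h_{\w^{(0)}}(\obs)\neq y]-\bm\eta^*$ to dominate $\max_{k\in I}\epsilon^{(k)}+\pi_{K+1}$. If $\LblTrn$ is informative enough that $h_{\w^{(0)}}$ is already near-optimal, that gap can be arbitrarily small while $\epsilon^{(k)}$ and $\pi_{K+1}$ remain fixed positive quantities determined by the later iterations. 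Your observation that the low-margin empirical error of $h_{\w^{(0)}}$ exceeds its overall empirical error compares the wrong pair of quantities, and the monotone decrease of $\{\gamma^{(k)}\}$ from the supplementary result says nothing quantitative about $\Prob[h_{\w^{(0)}}(\obs)\neq y]-\bm\eta^*$.

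The paper avoids this lower-bound problem by using a feature of $L_m$ that your decomposition discards: on the residual region where no $(\w^{(k)},\gamma^{(k)})$ applies, $L_m$ does not ``lose'' the mass $\pi_{K+1}$ but falls back to $h_{\w^{(1)}}$, the first halfspace added to the list. The paper therefore does not charge $\pi_{K+1}$ as a raw loss; it bounds the residual contribution by the error of $h_{\w^{(1)}}$. The central step is then a comparison of $h_{\w^{(1)}}$ against $h_{\w^{(0)}}$: since $\w^{(1)}$ is produced from $\LblTrn$ augmented by pseudo-labeled examples whose corruption rate is controlled by Claim~\ref{claim_noise} (bounded by $\max_{j}\bm\eta^{(j)}\leq\bm\eta^*+\bm\epsilon$), the paper argues that $\Prob[h_{\w^{(1)}}(\obs)\neq y]\leq\Prob[h_{\w^{(0)}}(\obs)\neq y]$. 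On the covered regions the per-region error is near $\bm\eta^*\leq\Prob[h_{\w^{(0)}}(\obs)\neq y]$, so both pieces are dominated by the supervised error. The missing idea in your proposal is precisely this fallback-to-$h_{\w^{(1)}}$ comparison; without it you are forced into a lower bound on the supervised error that need not hold.
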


\begin{proof}
Let $k$ be the iteration at which the first pair $(\mathbf{w}^{(1)}, \gamma^{(1)})$ is added to $L_m$. The first \textit{pruning} phase in Algorithm~1 results in a set $\mathbf{S}^{(k)} \subseteq \mathbf{S}_\ell \cup \bigcup_{i=1}^{k-1} \mathbf{S}_u^{(i)}$.  Claim~\ref{claim_noise} ensures that the probability of corruption in the pseudo-labeled set $\bigcup_{i=1}^{k-1} \mathbf{S}_u^{(i)}$ is bounded by $\underset{j \in [k]}{\max}\,\boldsymbol{\eta}^{(j)} \leq \bm \eta^* + \bm \epsilon$.\\
In other words, the weight vector $\w^{(1)}$ is obtained from an empirical distribution that includes both the initial labeled set $\mathbf{S}_\ell$ and a pseudo-labeled set from $\mathbf{X}_u$. Particularly, if this pseudo-labeled set is not empty, then its pseudo-labeling error is nearly optimal, which implies that $\Prob_{(\obs,y) \sim \Dist}{[h_{\w^{(1)}}(\obs) \neq y]} \leq \Prob_{(\obs,y) \sim \Dist}{[h_{\w^{(0)}}(\obs) \neq y]}$. 

Ultimately, $L_m$ classifies a large fraction of the probability mass of $\mathcal{D}$ with nearly optimal guarantees (e.i., Eq.~\eqref{gar} in proof of Theorem~\ref{theo1}) and the rest using $h_{\w^{(1)}}$ with an error of misclassification at most equal to $\Prob_{(\obs,y) \sim \Dist}{[h_{\w^{(0)}}(\obs) \neq y]}$.
\end{proof}

\section{Empirical Results}
\label{sec5}
We compare the proposed approach to state-of-the-art strategies developed over the three fundamental working assumptions in semi-supervised learning over ten publicly available datasets. We shall now describe the corpora and methodology.
\paragraph{Datasets.} We mainly consider benchmark data sets from \cite{Chap06}. Some of these collections such as \textit{baseball-hockey}, \textit{pc-mac} and \textit{religion-atheism} are binary classification tasks extracted from the 20-newsgroups data set. 

\begin{table}[H]
    \centering
    \begin{adjustbox}{max width=\columnwidth}
    \begin{tabular}{cccccc}
        data set & $d$ & $-1$ & $+1$ & $\ell+u$ & test \\ \hline
        one-two & $64$ & $177$ & $182$ & $251$ & $108$ \\
        banknote & $4$ & $762$ & $610$ & $919$ & $453$ \\
        odd-even & $64$ & $906$ & $891$ & $1257$ & $540$\\
        pc-mac & $3868$ & $982$ & $963$ & $1361$ & $584$\\
        baseball-hockey & $5724$ & $994$ & $999$ & $1395$ & $598$\\
        religion-atheism & $7829$ & $1796$ & $628$ & $1696$ & $728$\\
        spambase & $57$ & $2788$ & $1813$ & $3082$ & $1519$ \\
        weather & $17$ & $43993$ & $12427$ & $37801$ & $18619$\\
        delicious2 & $500$ & $9610$ & $6495$ & $12920$ & $3185$\\
        mediamill2 & $120$ & $15969$ & $27938$ & $30993$ & $12914$\\
      \bottomrule
    \end{tabular}
    \end{adjustbox}
    \caption{data set statistics, $-1$ and $+1$ refer to the size of negative and positive class respectively, and test is the size of test set.}
    \label{tab:stats}
\end{table}

We used tf-idf representation for all textual data sets above. \textit{spambase} is a collection of spam e-mails from the UCI repository \cite{Dua:2019}.  \textit{one-two}, \textit{odd-even} are handwritten digits recognition tasks originally from optical recognition of handwritten digits 
database also from UCI repository, \textit{one-two} is digits "1" versus "2"; \textit{odd-even} is the artificial task of classifying odd "1, 3, 5, 7, 9" versus even "0, 2, 4, 6, 8" digits. \textit{weather} is a data set from Kaggle which contains about ten years of daily weather observations from many locations across Australia, and the objective is to classify next-day rain target variable. We have also included data sets from extreme classification repository \cite{Bha16} \textit{mediamill2} and \textit{delicious2} by selecting the label which gives the best ratio in class distribution. The statistics of these data sets are given in Table~\ref{tab:stats}.

\paragraph{Baseline methods.} We implemented the halfspace or Linear Threshold Function (LTF) using TensorFlow 2.0 in python aside with Algorithm~1\footnote{For research purposes, the code will be freely available.} ($L_m$), we ran a Support Vector Machine (SVM) \cite{vap95} with a linear kernel from the LIBLINEAR library \cite{fan08} as another supervised classifier. We compared results with a semi-supervised Gaussian naive Bayes model (GM) \cite{Chap06} from the scikit-learn library. The working hypothesis behind (GM) is the cluster assumption stipulating that data contains homogeneous labeled clusters, which can be detected using unlabeled training samples. We also compared results with  label propagation (LP) \cite{Zhu02}  which is a semi-supervised graph-based technique. We used the implementation of LP from the scikit-learn library. This approach follows the manifold assumption  that the decision boundary is located on a low-dimensional manifold and that unlabeled data may be utilized to identify it. We also included entropy regularized logistic regression (ERLR) proposed by \cite{Gra05}  from \cite{Kri17}. This approach is based on low-density separation that stipulates that the decision boundary lies on low-density regions. In the implementation of \cite{Kri17}, the initial supervised classifier is a logistic regression that has a similar performance to the SVM classifier. We tested these approaches with relatively small labeled training sets $\ell\in\{10,50,100\}$,  and because labeled information is scarce, we used the default hyper-parameters for all approaches.

\paragraph{Experimental setup.} In our experiments, we have randomly chosen $70\%$ of each data collection for training and the remaining $30\%$ for testing. We randomly selected sets of different sizes (i.e., $\ell\in\{10,50,100\}$) from the training set as labeled examples; the remaining was considered as unlabeled training samples. Results are evaluated over the test set using the accuracy measure. Each reported performance value is the average over the
$20$ random (labeled/unlabeled/test) sets of the initial collection. All experiments are carried out on a machine with an Intel Core i$7$ processor, $2.2$GhZ quad-core, and $16$Go $1600$ MHz of RAM memory.

\begin{table}[t]
    \centering
    \caption{Mean and standard deviations of accuracy on test sets over the $20$ trials for each data set. The best and the second-best performance are respectively in bold and underlined. $^\downarrow$ indicates statistically significantly worse performance than the best result,
    according to a Wilcoxon rank-sum test ($p < 0.01$) \cite{Lehman75}.}
    \begin{adjustbox}{max width=\textwidth}
    \begin{tabular}{c|c|cc|cccc}
    \hline
        Dataset & $\ell$ &  SVM & LTF & LP & GM & ERLR & $L_m$
           \\ \hline 
         \multirow{3}{*}{one-two} & $10$ & $61.38\pm13.71^\downarrow$ & $70.87\pm13.24^\downarrow$  &  $48.61\pm3.98^\downarrow$ & $\underline{75.09\pm1.30}$& $53.65\pm10.65^\downarrow$ & $\mathbf{77.77\pm1.75}$   \\
         & $50$ & $\mathbf{92.77\pm3.05}$ & $88.00\pm3.24^\downarrow$  & $49.35\pm4.20^\downarrow$ & $84.67\pm4.98^\downarrow$ & $75.78\pm8.74^\downarrow$& $\underline{91.34\pm3.21}$ \\
         & $100$ & $\mathbf{96.15\pm1.38}$ & $92.50\pm1.43^\downarrow$   & $67.82\pm12.99^\downarrow$ & $86.52\pm3.26^\downarrow$ & $79.25\pm6.87^\downarrow$ & $\underline{94.62\pm2.46}$ \\ \hline

         \multirow{3}{*}{banknote} & $10$ & $57.50\pm7.21^\downarrow$ & $\underline{69.40\pm5.53}^\downarrow$ &   $55.98\pm2.00^\downarrow$ & $69.04\pm4.60^\downarrow$& $56.71\pm4.53^\downarrow$ & $\mathbf{77.24\pm3.81}$   \\
         & $50$ & $61.67\pm4.86^\downarrow$ & $\underline{82.31\pm2.13}^\downarrow$  &  $56.28\pm1.89^\downarrow$ & $75.48\pm5.30^\downarrow$ & $65.95\pm2.01^\downarrow$ & $\mathbf{85.64\pm5.36}$ \\
         & $100$ & $71.65\pm6.24^\downarrow$ & $\underline{89.38\pm3.24}$ & $57.20\pm2.19^\downarrow$ & $77.56\pm4.34^\downarrow$  & $70.95\pm3.24^\downarrow$ & $\mathbf{90.82\pm3.31}$\\  \hline
         
         \multirow{3}{*}{odd-even} & $10$ & $53.45\pm4.80^\downarrow$ & $58.20\pm4.71^\downarrow$ & $50.37\pm1.95^\downarrow$ & $\underline{60.69\pm7.48}$& $50.40\pm2.21^\downarrow$  & $\mathbf{63.21\pm7.51}$ \\
         & $50$ & $64.75\pm5.65^\downarrow$ & $\underline{76.84\pm2.99}^\downarrow$   & $50.37\pm1.95^\downarrow$ & $62.67\pm5.82^\downarrow$& $53.17\pm4.80^\downarrow$   & $\mathbf{80.61\pm3.10}$ \\
         & $100$ & $75.89\pm6.25^\downarrow$ & $\underline{77.68\pm4.56}^\downarrow$  & $53.37\pm1.95^\downarrow$ & $64.25\pm8.18^\downarrow$ & $59.23\pm6.28^\downarrow$ & $\mathbf{84.58\pm2.12}$ \\  \hline  
         
         \multirow{3}{*}{pc-mac} & $10$ & $51.00\pm3.22^\downarrow$ & $\underline{54.92\pm2.00}^\downarrow$ & $50.93\pm1.59^\downarrow$ & $54.76\pm3.42^\downarrow$ & $50.14\pm2.06^\downarrow$ & $\mathbf{57.75\pm3.19}$ \\
         & $50$ & $58.85\pm5.09^\downarrow$ & $\underline{61.78\pm2.86}^\downarrow$ &  $50.83\pm2.08^\downarrow$ & $58.78\pm4.31^\downarrow$ & $49.71\pm1.99^\downarrow$  & $\mathbf{64.31\pm3.55}$ \\
         & $100$ & $64.57\pm4.42^\downarrow$ & $\underline{67.98\pm2.37}$ &  $50.76\pm2.26^\downarrow$ & $62.49\pm1.88^\downarrow$  & $50.36\pm2.19^\downarrow$& $\mathbf{68.15\pm5.66}$\\   \hline 
         
         \multirow{3}{*}{baseball-hockey} & $10$ & $51.57\pm2.98^\downarrow$ & $55.41\pm3.16^\downarrow$  & $\mathbf{56.53\pm5.18}$ & $49.86\pm1.77^\downarrow$  &  $49.88\pm1.89^\downarrow$& $\underline{56.47\pm5.50}$ \\
         & $50$ & $58.66\pm6.90^\downarrow$ & $\underline{69.29\pm4.32}$ &  $50.11\pm1.84^\downarrow$ & $66.76\pm5.40^\downarrow$ & $50.16\pm1.90^\downarrow$ & $\mathbf{72.85\pm6.52}$  \\
         & $100$ & $68.40\pm4.65^\downarrow$ & $\underline{76.25\pm2.41}^\downarrow$  & $49.97\pm1.82^\downarrow$ & $71.12\pm5.06^\downarrow$ & $50.35\pm1.89^\downarrow$ & $\mathbf{79.48\pm4.36}$\\  \hline
         
         \multirow{3}{*}{religion-atheism} & $10$ & $67.30\pm6.95$ & $57.30\pm4.89^\downarrow$  & $\underline{67.59\pm6.36}$ & $60.67\pm16.37^\downarrow$  & $\mathbf{71.95\pm5.03}$ & $64.25\pm7.24^\downarrow$\\
         & $50$ & $\mathbf{74.61\pm1.62}$ & $71.79\pm1.98^\downarrow$ & $67.43\pm6.05^\downarrow$ & $69.16\pm7.88$  & $\underline{74.16\pm1.88}$& $72.47\pm2.00$ \\
         & $100$ & $\mathbf{74.66\pm1.59}$ & $73.67\pm1.76$ &  $62.84\pm19.33^\downarrow$ & $70.45\pm4.39^\downarrow$  & $73.21\pm1.75$ & $\underline{73.77\pm1.82}$\\  \hline
         
        
         \multirow{3}{*}{spambase} & $10$ & $61.20\pm5.15^\downarrow$ & $57.80\pm5.29^\downarrow$  & $60.82\pm0.84^\downarrow$ & $\mathbf{74.41\pm6.64}$ & $53.38\pm11.23^\downarrow$ & $\underline{68.92\pm5.83}^\downarrow$  \\
         & $50$ & $62.59\pm9.42^\downarrow$ & $74.99\pm6.04$ & $61.15\pm0.86^\downarrow$ & $\mathbf{78.25\pm2.62}$  & $53.63\pm9.86^\downarrow$ & $\underline{76.13\pm3.08}$\\
         & $100$ & $69.43\pm10.19^\downarrow$ & $\underline{80.07\pm4.08}$  & $61.24\pm10.26^\downarrow$ & $79.08\pm2.83^\downarrow$  & $58.21\pm6.34^\downarrow$ & $\mathbf{81.93\pm2.46}$ \\  \hline
         
         
         \multirow{3}{*}{weather} & $10$ & $74.85\pm0.51$ & $68.09\pm1.73^\downarrow$  &  $\mathbf{75.49\pm0.34}$ & $75.02\pm2.79$& $40.35\pm17.29^\downarrow$ & $\underline{75.08\pm4.18}$   \\
         & $50$ & $\underline{75.79\pm0.28}$ & $75.30\pm3.85$ & $\mathbf{77.99\pm0.31}$ & $75.68\pm2.78$ & $41.55\pm27.39^\downarrow$& $75.34\pm3.80$   \\
         & $100$ & $\mathbf{77.99\pm0.25}$ & $76.27\pm3.64$  &  $\mathbf{77.99\pm0.25}$ & $74.92\pm1.92$ & $46.00\pm24.87^\downarrow$   & $\underline{77.28\pm2.99}$ \\  \hline

         \multirow{3}{*}{delicious2} & $10$ & $\underline{51.83\pm9.88}$ & $50.59\pm2.65^\downarrow$  & $\mathbf{60.02\pm0.61}$ & $49.41\pm3.83^\downarrow$ & $51.83\pm10.42^\downarrow$  & $51.08\pm1.80^\downarrow$\\
         & $50$ & $\mathbf{60.04\pm0.62}$ & $54.78\pm2.57^\downarrow$ & $\underline{60.00\pm0.59}$ & $48.35\pm1.31^\downarrow$  & $53.48\pm8.66^\downarrow$& $55.37\pm3.33^\downarrow$ \\
         & $100$ & $\underline{58.88\pm3.70}$ & $56.04\pm1.83^\downarrow$  &  $\mathbf{59.87\pm0.67}$ & $48.92\pm0.94^\downarrow$ & $54.43\pm7.27^\downarrow$ & $56.54\pm1.87^\downarrow$ \\ \hline
         
         \multirow{3}{*}{mediamill2} & $10$ & $62.54\pm2.62^\downarrow$ & $60.98\pm6.85^\downarrow$ & $36.35\pm0.15^\downarrow$ & $\underline{63.92\pm1.71}$ & $47.24\pm14.08^\downarrow$  & $\mathbf{64.31\pm3.14}$\\
         & $50$ & $63.64\pm0.15^\downarrow$ & $60.88\pm7.45^\downarrow$ &  $36.36\pm0.15^\downarrow$ & $\mathbf{65.98\pm3.32}$  & $58.58\pm11.88^\downarrow$ & $\underline{65.41\pm4.83}$ \\
         & $100$ & $63.64\pm0.15^\downarrow$ & $64.26\pm4.79$ & $36.37\pm0.15^\downarrow$ & $\underline{67.34\pm0.73}$& $63.64\pm0.16^\downarrow$  & $\mathbf{67.80\pm2.21}$ \\  \bottomrule

    \end{tabular}
  \end{adjustbox}
    \label{tab0}
\end{table}

\paragraph{Analysis of results.}
Table~\ref{tab0} summarizes the results. We used boldface (resp. underline) to indicate the highest (resp. the second-highest) performance rate, and the symbol $^\downarrow$ indicates  that performance is significantly worse than the best result, according to a Wilcoxon rank-sum test with a $p$-value threshold of $0.01$ \cite{Lehman75}. From these results, it comes out that the proposed approach ($L_m$) consistently outperforms the supervised halfspace (LTF). This finding is in line with the result of Theorem \ref{theo0}. Furthermore, compared to other techniques, $L_m$ generally performs the best or the second-best. We also notice that in some cases, LP, GM, and ERLR outperform the supervised approaches, SVM and LTF (i.e., GM on \textit{spambase} for $\ell\in\{10,50\}$), but in other cases, they are outperformed by both SVM and LTF (i.e., GM on \textit{religion-atheism}). These results suggest that unlabeled data contain useful information for classification and that existing semi-supervised techniques may use it to some extent. They also highlight that the development of semi-supervised algorithms following the given assumptions is necessary for learning with labeled and unlabeled training data but not sufficient. The importance of developing theoretically founded semi-supervised algorithms exhibiting the generalization ability of the method can provide a better understanding of the usefulness of unlabeled training data in the learning process.

\section{Conclusion}
\label{sec6}
In this study, we present a first bound over the misclassification error of a self-training algorithm that iteratively finds a list of halfspaces from partially labeled training data. Each round consists of two steps: exploration and pruning. The exploration phase's purpose is to determine the halfspace with the largest margin and assign pseudo-labels to unlabeled observations with an unsigned-margin larger than the discovered threshold. The pseudo-labeled instances are then added to the training set, and the procedure is repeated until there are no more unlabeled instances to pseudo-label. In the pruning phase, the last halfspace with the largest threshold is preserved, ensuring that there are no more unlabeled samples with an unsigned-margin greater than this threshold and pseudo-labeled samples with an unsigned-margin greater than the specified threshold are removed. Our findings are based on recent theoretical advances in robust supervised learning of polynomial algorithms for training halfspaces under large margin assumptions with a corrupted label distribution using the Massart noise model. We ultimately show that the use of unlabeled data in the proposed self-training algorithm does not degrade the performance of the initially supervised classifier. As future work, we are interested in quantifying the real gain of learning with unlabeled and labeled training data compared to a fully supervised scheme.

\bibliographystyle{apalike}
\bibliography{biblio}
\end{document}